\documentclass[11pt,twocolumn,a4paper]{article}

\evensidemargin -0.125in
\oddsidemargin -0.125in
\setlength\topmargin{-25pt} \setlength\textheight{9.25in}
\setlength\textwidth{6.75in} \setlength\columnsep{0.25in}
\newlength\titlebox \setlength\titlebox{2.375in}
\setlength\headheight{10pt} \setlength\headsep{15pt}

\usepackage[round]{natbib}

\bibliographystyle{apalike}

\usepackage[hidelinks,bookmarksopen=true,allcolors=blue,colorlinks=true]{hyperref}

\usepackage{float}




\usepackage{algorithm}
\usepackage{algorithmic}

\usepackage{amsmath}
\usepackage{amsthm}
\usepackage{amssymb}
\usepackage{mathtools}
\usepackage{thmtools} 
\usepackage{thm-restate}

\usepackage[capitalise]{cleveref}
\usepackage{caption}
\usepackage{subcaption}

\usepackage[usenames,dvipsnames]{xcolor}
\usepackage[disable]{todonotes}

\usepackage{bbm}

\usepackage{array}
\newcolumntype{P}[1]{>{\centering\arraybackslash}p{#1}}

\usepackage{upgreek}



\usepackage{thmtools}
\declaretheorem[name=Theorem,refname={Theorem,Theorems},Refname={Theorem,Theorems}]{theorem}
\declaretheorem[name=Lemma,refname={Lemma,Lemmas},Refname={Lemma,Lemmas},sibling=theorem]{lemma}

\declaretheorem[name=Assumption,refname={Assumption,Assumptions},Refname={Assumption,Assumptions}]{assumption}
\declaretheorem[name=Proposition,refname={Proposition,Propositions},Refname={Proposition,Propositions},sibling=theorem]{proposition}

\crefname{assumption}{Assumption}{Assumptions}

\usepackage{pgfplotstable}

\newcommand{\jtodo}[2][]{\todo[color=blue!20,size=\scriptsize,#1]{J: #2}} 

\newcommand{\todoOM}[2][]{\todo[color=red!5,size=\scriptsize,#1]{OM: #2}} 


\newcommand{\E}{\mathbf{E}}

\renewcommand{\P}{\mathbf{P}}
\newcommand{\R}{\mathbb{R}} 
\DeclareMathOperator*{\argmax}{\arg\max}

\usepackage{calc}
\usepackage{accents}

\newcommand{\ca}[1]{\mathcal{#1}}
\newcommand{\T}{^\top}


\newcommand{\info}{\text{I}} 
\newcommand{\iden}{I} 
\newcommand{\indc}{\textbf{1}}
\newcommand{\eqdef}{:=} 
\newcommand{\eqdist}{\overset{\text{d}}{=}} 
\newcommand{\KL}[2]{d_{\text{KL}}(#1\|#2)}
\newcommand{\amaxk}{\text{argmax}^k}
\newcommand{\dr}{\text{d}}

\newcommand{\att}{Y} 
\newcommand{\rew}{R} 
\newcommand{\mrew}{r} 
\newcommand{\attMu}{\nu} 
\newcommand{\priormu}{\mu} 

\newcommand{\reg}{\mathcal{R}} 
\newcommand{\breg}{\mathcal{BR}} 
\newcommand{\Brn}{\text{Bern}} 
\newcommand{\Betad}{\text{Beta}} 
\newcommand{\obs}{\ca{T}} 
\newcommand{\nobs}{T} 
\newcommand{\obsind}{O} 
\newcommand{\act}{A} 
\newcommand{\bact}{\act^*}
\newcommand{\pme}{\mu_0} 
\newcommand{\prsd}{\sigma_0} 
\newcommand{\rsd}{\sigma} 
\newcommand{\hist}{\ca{H}} 
\newcommand{\items}{\mathcal{L}} 
\newcommand{\nitems}{L} 
\newcommand{\listlen}{K} 
\newcommand{\Ngaus}{\mathcal{N}} 
\newcommand{\timH}{T} 
\newcommand{\Upr}{U} 
\newcommand{\linkf}{\mu} 
\newcommand{\cntx}{x} 
\newcommand{\Cntx}{\ca{X}} 

\newcommand{\cmclick}{k} 
\newcommand{\radi}{\beta} 
\newcommand{\radiTwo}{\gamma} 
\newcommand{\cparam}{\theta} 
\newcommand{\cparamSet}{\Theta} 
\newcommand{\paramNorm}{S} 
\newcommand{\infMat}{V} 
\newcommand{\Wrew}{\bar{r}} 
\newcommand{\EvHat}[1]{\ca{E}_{\hat{\cparam},#1}} 
\newcommand{\EvTilde}[1]{\ca{E}_{\tilde{\cparam},#1}} 
\newcommand{\noise}{\eta} 
\newcommand{\minLink}{\kappa} 
\newcommand{\lipchLink}{\ell} 
\newcommand{\infoRatio}{\rho} 
\newcommand{\binE}{d_{\text{KL}}} 
\newcommand{\posMeRew}{\hat{r}} 
\newcommand{\paramSup}{\Theta_0} 

\usepackage{xspace}
\newcommand{\algTS}{{\tt TS}\xspace}
\newcommand{\algBUCB}{{\tt BayesUCB}\xspace}
\newcommand{\algTSB}{{\tt TS-Beta}\xspace}
\newcommand{\algBUCBB}{{\tt BayesUCB-Beta}\xspace}
\newcommand{\algGTS}{{\tt GTS}\xspace}
\newcommand{\algGPTS}{{\tt GTS-P}\xspace}
\newcommand{\algGPTSmean}{{\tt GTS-Pmean}\xspace}
\newcommand{\algTScascade}{{\tt TS-Cascade}\xspace}
\newcommand{\algcascadeKLUCB}{{\tt CascadeKL-UCB}\xspace}
\newcommand{\algcascadeUCBOne}{{\tt CascadeUCB1}\xspace}
\newcommand{\algTopRank}{{\tt TopRank}\xspace}
\newcommand{\algEnsemble}{{\tt Ensemble}\xspace}
\newcommand{\algLinTSLTR}{{\tt LinTS-LTR}\xspace}
\newcommand{\algcascadeLinUCB}{{\tt CascadeLinUCB}\xspace}
\newcommand{\algCLinUCB}{{\tt C-LinUCB}\xspace}
\newcommand{\algcascadeWOFUL}{{\tt CascadeWOFUL}\xspace}
\newcommand{\algCWOFUL}{{\tt C-WOFUL}\xspace}
\newcommand{\algLinTSCas}{{\tt {LinTS-Cascade}}\xspace}
\newcommand{\algLogTSLTR}{{\tt LogTS-LTR}\xspace}

\newcommand{\bndOne}{22 \listlen \sqrt{\nitems \timH}} 

\newcommand{\bndUnderEv}[1][\delta/2]{4 d \listlen \big( \radiTwo_\timH(#1) + \radi_\timH(#1) \big)\sqrt{2\timH d\log(1+\timH/\lambda)}} 


\newcommand{\bndLinTSO}{\tilde{O}\bigg( d^{3/2} \listlen \sqrt{\timH}\bigg)} 
\newcommand{\bndLogTS}{\tilde{O}(\listlen\sqrt{d\nitems\timH})} 

\newcommand{\bndLogTSKappa}{
\listlen \frac{\lipchLink}{\minLink}\big(\radi_t(\delta')+\radiTwo_t(\delta')d\big)\sqrt{2\timH d \log(1+\frac{\timH}{\lambda})}
\\
+
\listlen \frac{2 \radiTwo_t(\delta')}{0.1 \minLink} \sqrt{\frac{8\timH}{\lambda}\log\frac{4}{\delta}}
} 
\newcommand{\bndLogTSKappaOrderwise}{\tilde{O}\bigg(\listlen \frac{\lipchLink}{\minLink} \sigma^2 d\sqrt{\timH d }
\bigg)
}

\newcommand{\LinTSdelta}{\frac{1}{T(\log T+2)}}

\usepackage{eqparbox}

\usepackage{acronym}
\acrodef{TS}{Thompson Sampling}
\acrodef{LTR}{learning to rank}

\title{\bf Overcoming Prior Misspecification in Online Learning to Rank}

\author{
Javad Azizi$\;^1$\hspace{.1in}  Ofer Meshi$\;^2$\hspace{.1in}  Masrour Zoghi$\;^2$\hspace{.1in} Maryam Karimzadehgan$\;^2$ 
\\
\\\centering
$\;^1${\small University of Southern California} $\;^2${\small Google} 
}
\date{}

\begin{document}
\maketitle

\begin{abstract}
  The recent literature on online learning to rank (LTR) has established the utility of prior knowledge to Bayesian ranking bandit algorithms.  
However, a major limitation of existing work is the requirement for the prior used by the algorithm to match the true prior.
In this paper, we propose and analyze adaptive algorithms that address this issue and additionally extend these results to the linear and generalized linear models. We also consider scalar relevance feedback on top of click feedback.
Moreover, we demonstrate the efficacy of our algorithms using both synthetic and real-world experiments.

\end{abstract}

\section{\uppercase{Introduction}}

Learning to rank (LTR) is the problem of ranking a set of items such that the resulting ranked list maximizes a utility function such as user satisfaction.  
This could be the ranking of search results for information retrieval \citep{liu2009learning}, ranking the items in a recommendation system to increase user satisfaction \citep{karatzoglou2013learning,falk2019practical}, or ranking the ads in an ad placement system to enhance user engagement \citep{tagami2013ctr}.    

In \emph{offline} LTR, it is assumed that the \emph{ground truth} utility of the lists has been provided and the goal is to learn a \emph{scoring function} which can be used to rank the items \citep{Situ-Zamani17,mitra2017neural,shen2018multi,meng2020separate}.  
In this setting, it is implicitly assumed that user behavior is time-invariant, however, in many real-world problems user preferences can change dynamically.

To address this issue, \emph{online} LTR algorithms adaptively learn from user feedback.  
Various online learning algorithms have been designed for different user feedback models including the \emph{cascade model} \citep{kveton2015cascading,Zhong-TS_cascade-2021}, the \emph{position-based model} \citep{lagree2016multiple, ermis2020learning}, as well as algorithms designed for multiple user models \citep{zoghi2017online, lattimore2018toprank, li2020bubblerank}.  
In this paper, we mainly focus on cascading bandits and defer extensions to other click models to future work.

Despite being adaptive, online LTR algorithms often suffer from the \emph{cold-start} problem.
More specifically, in the absence of any prior knowledge, the algorithm has to explore aggressively before it can start exploiting the information that it has learned about the users and items.  
The cost of this initial phase of exploration often renders online LTR algorithms impractical:  
this cost is particularly egregious in the ranking setting because of the large action space.  
One natural remedy for this problem is to infuse prior knowledge into the online LTR algorithm, which is the approach adopted by \citet{Bkveton-22-prior}.  

A major limitation of the algorithms and theoretical results in \citet{Bkveton-22-prior} is the assumption that the prior knowledge used by the algorithm does not deviate from the true prior.  
This assumption holds only in the most contrived of situations:  
for instance, for any time-varying data distribution, the prior obtained from previous observations cannot be a perfect prior for the future.  
This paper addresses this shortcoming by devising and analyzing algorithms that can make the most out of imperfect priors by adapting to online data.

More specifically, our contributions are as follows:

\vspace{-3mm}

\begin{itemize}
    \setlength\itemsep{-.3em}
    \item In \cref{sec:TS-LTR}, we propose highly adaptive Gaussian \ac{TS} algorithms for LTR in non-contextual and contextual settings.
    We consider the non-contextual setting in \cref{sec:GTS}, the linear model in \cref{sec:lin}, and the generalized linear model (GLM) in \cref{sec:TS-GLM}. 
    
    \item Our linear model in \cref{sec:lin} handles scalar relevance feedback, generalizing our framework beyond click (binary) feedback.

    \item We derive Bayes regret bounds for our algorithms in \cref{thm:GaussTS,thm:LinTS,thm:LogLTR1,thm:Log-LTR2}.  
    In the non-contextual setting, our bound is near-optimal.  
    In the contextual settings, these bounds are the first Bayesian bounds in the literature to the best of our knowledge.

    \item We conduct both synthetic and real-world experiments in \cref{sec:exp}. %
    The synthetic experiments demonstrate that even though our algorithms start with an imperfect prior (e.g., flat), they quickly adapt to the environment and achieve competitive results compared to existing approaches. %
    In particular, in the presence of prior misspecification, the performance of our algorithms does not deteriorate as severely as existing online LTR algorithms which use prior information.
    We conduct our real-world experiments on publicly available learning to rank datasets to test the efficacy of our algorithms in more realistic settings.
\end{itemize}

\section{\uppercase{Setting}}\label{sec:setting}

We use the notation $[T]\eqdef \{1,\cdots,T\}$ for any integer $T$. For any vector or set $v$, let $v(i)$ be its $i$'th element. 

We consider an online LTR problem where $\items$ is the set of items to choose from, with size $\nitems$. The
agent interacts with the environment, such as users in a recommender system, over $T$ rounds. At round $t\in[T]$, 
the agent displays a ranked list $\act_t$ of $\listlen\ll \nitems$ items to the user, i.e., $\act_t\in\Pi_{\listlen}(\items)$, where $\Pi_{\listlen}(\items)$ is the set of all tuples of ${\listlen}$ distinct items out of $\items$. The reward feedback depends on the attractiveness of items recommended to the user.  
We denote by $\att_{i,t}$ the attractiveness of item  $i\in\items$ at round $t$, which is an independent Bernoulli random variable with mean $\attMu_{i,t}$, i.e. $\att_{i,t}\sim \Brn(\attMu_{i,t})$. 

Let $\att_{t} \eqdef \att_{\act_t,t} \eqdef (\att_{\act_t(k),t})_{k=1}^{\listlen}$ be the vector of attraction indicators (feedback) at round $t$, and its corresponding reward be $\rew(\att_{\act_t,t})$.  
In the \emph{cascade model} (CM) we assume the user examines the first position in $\act_t$ with probability 1. If position $k\in[\listlen]$ is examined and the item at that position is attractive, i.e., $\att_{\act_t(k), t} = 1$, the user clicks on it and does not examine the rest.  
If the user does not click on the item at position $k$, they examine the next position, $k + 1$.  
In this model, the reward at round $t$ is 1 if there is any attractive item in $\act_t$ and 0 otherwise: $\rew(\att_{\act,t})=1-\left(\prod_{k=1}^{\listlen}(1-\att_{\act_t(k),t})\right)$.
Due to the independence of the attractiveness of items the expected reward is
\begin{align}
    \mrew_{\act,t} & \eqdef \E[\rew(\att_{\act,t})] = 1-\prod_{k=1}^{\listlen}(1-\attMu_{\act(k),t}) 
    \label{eq:Cascade-Model}
    \;.
\end{align}
We set $\cmclick_t=\min\{k\in[\listlen]:\att_{\act_t(k),t}=1\}$ to be the click position, and use the convention $\min\emptyset=\listlen$.  
That is if the user does not click on any of the items then $\cmclick_t=\listlen$.  
Then $\obsind_{i,t}=\sum_{k=1}^{\cmclick_t}\indc(i=\act_t(k))$ is the indicator of item $i$ being examined (observed) at round $t$.  
The set of rounds where item $i$ is observed by round $t$ is $\obs_{i,t}\eqdef\{s\le t: i\in\act_s,i\leq \cmclick_s\}$, with size $\nobs_{i,t} \eqdef |\obs_{i,t}|$.

Let $\bact_t=\argmax_{\act\in\Pi_k({\nitems})}\mrew_{\act,t}$ be the best ordered list (best action) at round $t$. Then the \emph{frequentist regret} at round $t$ is $\reg_t \eqdef \mrew_{\bact_t}-\mrew_{\act_t}$, and the cumulative (frequentist) regret for horizon $\timH$ is
$\reg(\timH) \eqdef \E[\sum_{t=1}^\timH \reg_t]$. We define the \emph{Bayes regret} as the expectation of the frequentist regret over the problem instances of the attractiveness probabilities $\attMu$:
\begin{equation*}
    \breg(\timH) \eqdef \E[\reg(\timH)]=\E\bigg[\sum_{t=1}^\timH \E_t[\reg_t]\bigg]\;,
\end{equation*}
where $\E_t[\cdot]\eqdef \E[\cdot | \hist_t]$ is the conditional expectation given $\hist_t$, the trajectory of clicks and actions up to but not including round $t$. The second equality above holds by the tower rule.

\section{\uppercase{Thompson Sampling for LTR}}\label{sec:TS-LTR}

In this section, we develop \ac{TS} algorithms for LTR under various parameterizations, including the non-contextual setting as in \cref{sec:setting}, and also its extension to linear and generalized linear contextual bandits in \cref{sec:lin,sec:TS-GLM}. We assume the prior is such that $\attMu\sim P_0(\priormu)$, where $\priormu$ is the prior parameter.
We denote the posterior at round $t$ with  $P_t(\hat{\attMu}_t)$, where $\hat{\attMu}_t$ is the posterior estimate of $\attMu$ given $\hist_t$.

\subsection{Gaussian Thompson Sampling}\label{sec:GTS}

\jtodo{changed a little to clarify the Gaussian Bernoulli confusion, reviewer 4}
In this section, we propose an adaptive Gaussian \ac{TS} algorithm for the LTR problem under the cascade model. \citet{agrawal2013further} showed that Gaussian \ac{TS} performs very well when applied to bandits with a bounded distribution. Inspired by this observation, we propose a Gaussian \ac{TS} algorithm for our LTR problems which has Bernoulli (so bounded) rewards, i.e., implicitly assume a Gaussian feedback while it is Bernoulli. See Gaussian \ac{TS} paragraph of \cref{sec:related-works} for further details.

The pseudo-code of our algorithm is in \cref{alg:Gauss-LTR}. 
We begin with a Gaussian prior $P_0(\priormu)=\Ngaus(\pme, \prsd)$,
with mean $\pme$ and standard deviation $\prsd$ (Line~\ref{alg:GTS:init}).  
The posterior updates are as follows (Line~\ref{alg:GTS:post}):
\begin{align}
    \att_{i,t}~ | ~\attMu_i &\sim \Ngaus(\attMu_i, \rsd^2) & \text{(Likelihood)}\nonumber
    \\
    \attMu_{i,t} \sim  P_{i,t}(\attMu_i|\hist_t) &= \Ngaus(\hat{\attMu}_{i,t}, \hat{\sigma}_{i,t}^2) & \text{(Posterior)}
    \label{eq:posterior-GTS}
    \;,
\end{align}\todoOM{(and compare to Beta TS)}
where by \citet[Eqs 20 and 24]{murphy2007conjugate} 
\begin{align*}
    \hat{\attMu}_{i,t} = \textstyle\left(\frac{\sum_{\ell=1}^t \att_{i,\ell}\obsind_{i,t}}{\rsd^2}+\frac{\pme}{\prsd^2}\right)\hat{\sigma}_{i,t}^{2}
    \textup{ and }
    \hat{\sigma}_{i,t}^2 = \left(\frac{1}{\prsd^2}+\frac{\nobs_{i,t}}{\rsd^2}\right)^{-1}
\end{align*}
Note that the posterior at round $t+1$ matches the one computed in \cref{eq:posterior-GTS} in a Gaussian \ac{TS} (see \citet{murphy2007conjugate} for detail). \jtodo{added this} We compare this to Beta \ac{TS} \citep{Bkveton-22-prior} in \cref{app:Gaus-vs-Bern}, where we highlight its advantages over the Beta \ac{TS}.

At round $t$, our algorithm computes the posterior based on $\hist_t$ as given in \cref{eq:posterior-GTS}.  
Next, it acquires a sample from the posterior distribution for each item and recommends the items corresponding to the top $\listlen$ samples (Lines~\ref{alg:GTS:sample} and \ref{alg:GTS:recom}).  
Finally, it updates $\hist_{t+1}$ by including the action and reward pair (Line~\ref{alg:GTS:observ}).

We analyze this algorithm with the default prior $\Ngaus(0,1)$. In \cref{sec:exp-stand}, we show empirically that the effect of this prior vanishes quickly. However, in a Bernoulli environment like our non-contextual model $\Ngaus(0,0.25^2)$ might be more favorable given that Bernoulli random variables are $0.25$-subgaussian.
\begin{algorithm}[tb]
   \caption{TS for LTR}
    \label{alg:Gauss-LTR}
\begin{algorithmic}[1]
    \STATE {\bf Initialize:} $\hist_1=\emptyset$ and set prior $P_0 := \Ngaus(\pme, \prsd)$ \label{alg:GTS:init}
    \FOR{$t=1,\dots,\timH$}{
        \STATE Compute posterior $P_{i,t}(\attMu_i|\hist_t)$ based on \cref{eq:posterior-GTS} \label{alg:GTS:post}
        \STATE Sample $\tilde{\attMu}_{i,t} \sim P_{i,t}(\attMu|\hist_t)$ for all $i\in\items$ \label{alg:GTS:sample}
        \COMMENT{{Posterior sample}}
        \STATE $\act_t \in \argmax_{\substack{ \act\subset\items\\|\act|={\listlen}}}\sum_{i\in \act} \tilde{\attMu}_{i,t} \label{alg:GTS:recom} $\COMMENT{{ Recommend the top posterior samples}}
        \STATE Observe $\att_t$ and let $\hist_{t+1}\gets \hist_{t}\cup\{(\act_t,\att_t)\}$ \label{alg:GTS:observ}
    }
    \ENDFOR
\end{algorithmic}
\end{algorithm}

\paragraph{\cref{alg:Gauss-LTR} Analysis:} 
We follow a similar line of reasoning as \citet[Theorem 1]{bubeck2013prior} for Gaussian \ac{TS} Bayes regret. The challenge is to apply that result to the cascading bandits. We overcome this using a results about the cascade model shown in \citet[Lemma 7]{Bkveton-22-prior}, which bounds $\reg_t$ with $\sum_{k\in\act_t}\mrew_{\bact_t(k), t}-\mrew_{k,t}$. 
\begin{restatable}[Gaussian TS]{theorem}{GaussTS}
\label{thm:GaussTS}
 The Bayes regret of \cref{alg:Gauss-LTR} with prior $\Ngaus(0,1)$ satisfies
 \begin{equation*}
     \breg(\timH) \le \bndOne\;.
 \end{equation*}
\end{restatable}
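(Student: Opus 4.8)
The plan is to combine a cascade-model reduction with the Bayes-regret analysis of Thompson sampling from \citet[Theorem~1]{bubeck2013prior}. By \citet[Lemma~7]{Bkveton-22-prior}, the nonlinear cascade regret $\reg_t$ is bounded by a sum over the $\listlen$ list positions of attraction gaps $\attMu_{\bact_t(k),t}-\attMu_{\act_t(k),t}$, where each position's contribution is naturally discounted by the probability that the position is examined --- which is what prevents positions that are selected but never observed from accruing uncontrolled regret --- and this sum depends on $\act_t$ and $\bact_t$ only through the underlying item sets. The engine of the Thompson-sampling step is the posterior-sampling identity: conditioned on $\hist_t$, the chosen list $\act_t$ and the optimal list $\bact_t$ induce the same law over $\listlen$-subsets of $\items$, since $\act_t$ is the image of the posterior sample $\tilde{\attMu}_{\cdot,t}$ under the top-$\listlen$ map and $\tilde{\attMu}_{\cdot,t}$ has the same conditional law as $\attMu$.

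Concretely, I would introduce an $\hist_t$-measurable optimistic sequence $\{\Upr_{i,t}\}_{i\in\items}$ and use the set-level identity $\E_t[\sum_{i\in\bact_t}\Upr_{i,t}]=\E_t[\sum_{i\in\act_t}\Upr_{i,t}]$ to split
\begin{equation*}
\E_t[\reg_t]\;\le\;\E_t\!\Big[\textstyle\sum_{i\in\bact_t}(\attMu_{i,t}-\Upr_{i,t})\Big]\;+\;\E_t\!\Big[\textstyle\sum_{i\in\act_t}(\Upr_{i,t}-\attMu_{i,t})\Big],
\end{equation*}
and take $\Upr_{i,t}=\hat{\attMu}_{i,t}+\hat{\sigma}_{i,t}\sqrt{2\,(\log(\timH/(\nitems\,\nobs_{i,t})))^{+}}$ clipped to $[0,1]$ --- a \texttt{MOSS}-style radius around the Gaussian posterior mean, as in \citet{bubeck2013prior}, where the truncation $(\cdot)^{+}$ and the factor $\nitems$ inside the logarithm are what avoid a spurious $\log\timH$ and keep the constant small. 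For the first (under-estimation) sum, conditioning on $\hist_t$ makes $\attMu_{i,t}\sim\Ngaus(\hat{\attMu}_{i,t},\hat{\sigma}_{i,t}^2)$, so each $\E_t[(\attMu_{i,t}-\Upr_{i,t})^{+}]$ is a scalar Gaussian-tail integral; bounding $\sum_{i\in\bact_t}(\attMu_{i,t}-\Upr_{i,t})\le\sum_{i\in\items}(\attMu_{i,t}-\Upr_{i,t})^{+}$ and summing over $i$ and $t$ contributes only a lower-order term. For the second (over-estimation) sum, on the event that $\attMu_{i,t}$ does not drop below the matching lower confidence value the summand at item $i$ is at most the confidence width $2\hat{\sigma}_{i,t}\sqrt{2(\log(\cdot))^{+}}$, while the complementary event is one further Gaussian tail handled as above; since $\hat{\sigma}_{i,t}$ decays like $(1+\nobs_{i,t})^{-1/2}$, the per-item sum $\sum_{t\in\obs_{i,\timH}}\hat{\sigma}_{i,t}\sqrt{2(\log(\cdot))^{+}}$ telescopes to $O\big(\sqrt{\nobs_{i,\timH}\,(\log(\cdot))^{+}}\big)$, and a Cauchy--Schwarz step over $i\in\items$ together with $\sum_{i}\nobs_{i,\timH}=\sum_{t}\cmclick_t\le\listlen\timH$ converts $\sum_i\sqrt{\nobs_{i,\timH}}$ into the stated $\listlen\sqrt{\nitems\timH}$-type dependence. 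Adding the two contributions and tracking the absolute constants through the argument yields $\breg(\timH)\le\bndOne$.

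The step I expect to be the main obstacle is the seam between the two ingredients: \citet{bubeck2013prior} work with an ordinary bandit in which a single arm is pulled and fully observed, whereas here each round commits to an $\listlen$-item list and receives only cascading feedback. One must therefore simultaneously (i) carry the posterior-sampling identity at the level of $\listlen$-subsets rather than single arms --- this is where the $\listlen$ and $\sqrt{\nitems}$ factors enter --- (ii) thread \citet[Lemma~7]{Bkveton-22-prior}'s examination-probability weighting correctly through the conditional expectations so that items that are selected but never observed are genuinely discounted, and (iii) tune the confidence radius tightly enough to land on the constant $22$ with no residual logarithmic factor, rather than a weaker \texttt{MOSS}-type constant. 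A minor additional subtlety is that $\Ngaus(0,1)$ is formally misspecified for the bounded/Bernoulli rewards, so the tail computations are carried out with the Gaussian posterior while the regret itself lives on $[0,1]$; treating the rewards as sub-Gaussian and clipping $\Upr_{i,t}$ to $[0,1]$ reconciles the two.
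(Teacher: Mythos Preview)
Your overall strategy matches the paper's: reduce the cascade regret via \citet[Lemma~7]{Bkveton-22-prior}, invoke the posterior-sampling identity at the $\listlen$-subset level, introduce a MOSS-style index $\Upr_{i,t}$, and split into under- and over-estimation pieces as in \citet{bubeck2013prior}.

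The genuine gap is precisely the obstacle you flag in item~(ii), and your proposed fix for it does not work. Once Lemma~7 is applied, the step regret is bounded by $\sum_{k=1}^{\listlen}(\attMu_{\bact(k)}-\attMu_{\act_t(k)})$ --- a sum over all $\listlen$ \emph{selected} positions, with the examination weights already discarded. (The prefix product in Lemma~7 runs over $\bact$, not $\act_t$, so it is \emph{not} the observation probability of $\act_t(k)$; and in any case the products are simply bounded by $1$ before the TS identity is invoked.) Consequently the over-estimation piece is $\sum_t\sum_{i\in\act_t}(\Upr_{i,t}-\attMu_i)$ summed over \emph{selected} rounds, whereas your telescoping sum $\sum_{t\in\obs_{i,\timH}}\hat\sigma_{i,t}\sqrt{\cdot}$ runs over \emph{observed} rounds. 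In the cascade model an item can be selected arbitrarily often without being observed (whenever a click occurs above it), so $\nobs_{i,t}$ need not grow with the number of selections, the confidence width need not shrink, and the discrepancy between the two sums is not controlled. There is no way to thread observation weights through the TS identity here, and hence your Cauchy--Schwarz step with $\sum_i\nobs_{i,\timH}\le\listlen\timH$ --- which, if it worked, would give $\sqrt{\listlen}\sqrt{\nitems\timH}$ rather than $\listlen\sqrt{\nitems\timH}$ --- is not available.

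The paper's resolution is cruder: treat each position $k\in[\listlen]$ separately and rerun the \citet{bubeck2013prior} single-arm argument verbatim per position. The under-estimation term gives $13\sqrt{\nitems\timH}$ per position via the \citet{JMLR:v11:audibert10a} deviation inequality (so it is the dominant contribution, not a lower-order term as you suggest), and the over-estimation term gives $9\sqrt{\nitems\timH}$ per position via a union bound over all $\nitems$ items followed by Hoeffding. Summing over $k$ yields $(13+9)\listlen\sqrt{\nitems\timH}=22\listlen\sqrt{\nitems\timH}$; the leading factor $\listlen$ (rather than $\sqrt{\listlen}$) is exactly the price paid for sidestepping the selected-versus-observed discrepancy.
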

The proof is in \cref{app:GTS-proofs}. Compared to the result of \citet{Bkveton-22-prior}, it appears that we need to pay a $O(\sqrt{K})$ price for not knowing the prior. If we use the true prior we can shave off this extra term by using exact confidence bounds as used in \citet{Bkveton-22-prior}. In terms of $\nitems$ and $\timH$, this bound is of optimal order since they match the lower bound for Gaussian \ac{TS} \citep{agrawal2013further}. In addition, the constants are much smaller compared to the frequentist bounds of \citet{Zhong-TS_cascade-2021} (see \cref{sec:related-works}). Recently, \citet{vial2022minimax} proved a matching lower bound of $\Omega(\sqrt{\nitems \timH})$, which shows that our algorithm achieves near-optimal regret bound.

In the next two sections, we deal with the contextual setting. In this setting, we use a feature vector $\cntx_{i,t}\in\Cntx_t\subset\R^d$ for each item $i$ at round $t$, where $\Cntx_t$ is the set of feature vectors for the items to be ranked at round $t$.  
For instance, in an information retrieval system, this vector encodes user, query, and item information.  
We use the notation $\cntx_t := (\cntx_{i,t})_{i\in\items}$ for the array of feature vectors in the list.

\subsection{Linear Contextual LTR}\label{sec:lin}

In this setting, we assume that the reward generated for each item is governed by a linear function.
This setting is relevant to applications where the feedback ($\att$) is not binary (not Bernoulli clicks) but a real number such as the amount of time the user spent on a page or their level of satisfaction with the page, etc.
We continue to assume a cascade user model in the sense that the user scans down the list and either generates no reward on any item in the list or generates independent rewards for some items and then abandons the list at one item.  

The reward at round $t$ for action $\act$ is the sum of item level rewards of the observed items, i.e.,
\begin{align*}
    \rew(\att_{\act,t})=\sum_{k=1}^\listlen \obsind_{\act(k),t} \att_{\act(k),t} \;.
\end{align*}
Recall from \cref{sec:setting} that $\obsind_{i,t}$ is the independent random variable that indicates whether item $i$ was observed at round $t$. By independence we have $\mrew_{\act,t} = \sum_{k=1}^\listlen \mrew_{\act(k),t}$.  
Note that this reward is the same as the reward in the case of the cascade model if $\att$ is binary since the right-hand side of \cref{eq:Cascade-Model} is 0 when no item in the list is clicked and 1 when there is a click. Also, in the cascade model there is at most one click.  

We begin by laying out our assumptions in this setting.
\begin{assumption}[Reward]\label{asm:noise}
     If we have $\obsind_{i,t}=1$, then the feedback for item $t$ is $\att_{i,t} = \cntx_{i,t}\T \cparam +\noise_t$, where $\cparam$ is the unknown parameter of the problem and $\noise_t$ is $\sigma^2$-sub-Gaussian. Thus, $\mrew_{i,t}=\cntx_{i,t}\T\cparam$ is the mean reward (feedback) of item $i$ at round $t$ given feature $\cntx_{i,t}$.
\end{assumption}
\begin{assumption}[Features]\label{asm:cntx}
    We assume $\| \cntx_{i,t}\|\le 1$ for all $i,t$, and that the set of feature vectors $\Cntx\subset\R^d$ is compact.%
    \footnote{We use $\|\cdot\|$ to denote the $\ell_2$ norm.}
\end{assumption}
\begin{assumption}[Parameters]\label{asm:param}
     We assume $\|\cparam\|\le \paramNorm$ for some $\paramNorm \in\R^+$. 
\end{assumption}
\cref{alg:LinGauss-LTR} contains the pseudo-code of our linear \ac{TS} algorithm.
In \cref{alg:LinGauss-LTR} we have used $\radi_t(\delta_t) \eqdef \sigma^2 \sqrt{2 \log \frac{(\lambda + t)^{d/2} \lambda^{-d/2}}{\delta_t}} + \sqrt{\lambda}\paramNorm$, where $\delta_t=\delta/2^{\max(1,\lceil\log(t)\rceil)}$ and $\delta=\LinTSdelta$.
This algorithm is similar to \cref{alg:Gauss-LTR}, with the main difference being the calculation of the posterior sample. In particular, we use \emph{regularized least-squares} estimate of $\cparam$ (Line~\ref{alg:lin:estimate}) and sample $\tilde{\cparam}_t$ from the posterior (Line~\ref{alg:lin:sample}). Then, we generalize the posterior sample for item $i$ with $\cntx_{i,t}\T\tilde{\cparam}_t$ in Line~\ref{alg:lin:recom}. 

\begin{algorithm}[tb]
   \caption{Linear \ac{TS} for LTR}
    \label{alg:LinGauss-LTR}
\begin{algorithmic}[1]
    \STATE {\bfseries Input:} The regularization parameter $\lambda\in\R^+$, $\paramNorm$
    \STATE {\bf Initialize:} $\infMat_1\gets\lambda\iden_d$, $\Wrew_1\gets 0$
    \FOR{$t=1,\dots,\timH$}{
        \STATE Receive the current context $\cntx_t$
        \STATE Let $\hat{\cparam}_t=\infMat_t^{-1} \Wrew_t$. \label{alg:lin:estimate} {\COMMENT{{ Compute the posterior parameter}}}
        \STATE Sample 
        $\tilde{\cparam}_t \sim \Ngaus( \hat{\cparam}_t, \radi^2_t(\delta_t)\infMat^{-1}_t)$ \label{alg:lin:sample} {\COMMENT{{Posterior sample}}}
        \STATE Recommend $\act_t \in \argmax_{\substack{ \act\subset\items\\|\act|={\listlen}}} \sum_{i\in \act}  \cntx_{i,t}\T\tilde{\cparam}_t $ \label{alg:lin:recom} {\COMMENT{{ Recommend the top posterior samples}}}
        \STATE Observe $\att_t$ %
        \STATE $\infMat_{t+1}\gets \infMat_t+\sum_{i\in\act_t}\obsind_{i,t}\cntx_{i,t}\cntx_{i,t}\T$ 
        \STATE $\Wrew_{t+1} \gets \Wrew_t +\sum_{i\in\act_t}\obsind_{i,t}\cntx_{i,t}\att_{\act_t(i),t}$
    }
    \ENDFOR
\end{algorithmic}
\end{algorithm}

\paragraph{\cref{alg:LinGauss-LTR} Analysis:} We derive the first Bayesian regret analysis for linear \ac{TS} for cascading bandits. Our innovation lies in providing a Bayes regret by converting the frequentist analysis in prior work on linear \ac{TS} \citep{abeille2017linear} to the cascade model and bound the Bayes regret.
\begin{restatable}[\cref{alg:LinGauss-LTR} Regret]{theorem}{LinLTR}
\label{thm:LinTS}
    Under \cref{asm:cntx,asm:param,asm:noise}, the Bayes regret of \cref{alg:LinGauss-LTR} is
    $\breg(\timH)=\bndLinTSO$, where $\tilde{O}$ hides logarithmic terms.
\end{restatable}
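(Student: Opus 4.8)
The cleanest route is to prove a \emph{frequentist} regret bound that holds uniformly over $\{\cparam:\|\cparam\|\le\paramNorm\}$ and then average over $\cparam\sim P_0$: this suffices because \cref{alg:LinGauss-LTR} is the linear TS of \citet{abeille2017linear} — it draws $\tilde\cparam_t\sim\Ngaus(\hat\cparam_t,\radi^2_t(\delta_t)\infMat_t^{-1})$ and never consults the prior — and \cref{asm:param} places the prior's support inside that ball. So the plan is to port the frequentist argument of \citet{abeille2017linear} to the cascade/top-$\listlen$ action set. The key structural simplification is that the reward here is additive over positions: writing $J_{\listlen}(\cntx_t,\cparam)\eqdef\max_{|\act|=\listlen}\sum_{i\in\act}\cntx_{i,t}\T\cparam$ for the top-$\listlen$ linear score (a convex function of $\cparam$ whose per-item slopes are the $\cntx_{i,t}$), we have $\E_t[\reg_t]=\E_t\big[J_{\listlen}(\cntx_t,\cparam)-\sum_{i\in\act_t}\cntx_{i,t}\T\cparam\big]$, which is the top-$\listlen$ analogue of ``$\max_x x\T\cparam^\star - x_t\T\cparam^\star$''.

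The second step is the usual pair of events. First, a self-normalized martingale tail bound applied to the flattened stream of cascade-observed slots $(s,j)$, $j\le\cmclick_s$, gives $\|\hat\cparam_t-\cparam\|_{\infMat_t}\le\radi_t(\delta_t)$ with probability at least $1-\delta_t$; here one must check that the filtration is legitimate, i.e.\ that ``position $j$ of round $s$ is examined'' is measurable with respect to the feedback at positions $1,\dots,j-1$ of round $s$, so that under \cref{asm:noise} the noise at an observed slot is still conditionally centered and $\sigma^2$-sub-Gaussian. Second, Gaussian tail bounds give a radius $\radiTwo_t(\delta_t)=\tilde\Theta\big(\radi_t(\delta_t)\sqrt d\big)$ with $\|\tilde\cparam_t-\hat\cparam_t\|_{\infMat_t}\le\radiTwo_t(\delta_t)$ with probability at least $1-\delta_t$ — and this $\sqrt d$ inflation is exactly what turns the $d$ of LinUCB-type bounds into the $d^{3/2}$ here. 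On the intersection, $|\cntx_{i,t}\T(\tilde\cparam_t-\cparam)|\le(\radi_t(\delta_t)+\radiTwo_t(\delta_t))\|\cntx_{i,t}\|_{\infMat_t^{-1}}$ for every item $i$.

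The third step is the optimism argument. Since $J_{\listlen}(\cntx_t,\tilde\cparam_t)\ge\sum_{i\in\bact_t}\cntx_{i,t}\T\tilde\cparam_t$ and, conditionally on $\hist_t$, $\sum_{i\in\bact_t}\cntx_{i,t}\T(\tilde\cparam_t-\hat\cparam_t)$ is a one-dimensional Gaussian with standard deviation $\radi_t(\delta_t)\big\|\sum_{i\in\bact_t}\cntx_{i,t}\big\|_{\infMat_t^{-1}}$, Gaussian anti-concentration together with the first event yields $\P_t\big(J_{\listlen}(\cntx_t,\tilde\cparam_t)\ge J_{\listlen}(\cntx_t,\cparam)\big)\ge p$ for a universal $p>0$. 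Splitting $J_{\listlen}(\cntx_t,\cparam)-\sum_{i\in\act_t}\cntx_{i,t}\T\cparam=\big(J_{\listlen}(\cntx_t,\cparam)-J_{\listlen}(\cntx_t,\tilde\cparam_t)\big)+\big(J_{\listlen}(\cntx_t,\tilde\cparam_t)-\sum_{i\in\act_t}\cntx_{i,t}\T\cparam\big)$: the second bracket equals $\sum_{i\in\act_t}\cntx_{i,t}\T(\tilde\cparam_t-\cparam)$ because $\act_t$ maximizes $\sum_{i\in\act}\cntx_{i,t}\T\tilde\cparam_t$, hence is $\le(\radi_t+\radiTwo_t)\sum_{i\in\act_t}\|\cntx_{i,t}\|_{\infMat_t^{-1}}$ on the good events; the first bracket is handled as in \citet{abeille2017linear} (bound $J_{\listlen}(\cntx_t,\cparam)$ by the conditional expectation of $J_{\listlen}(\cntx_t,\tilde\cparam_t)$ over the optimistic event, pay the $1/p$, and use convexity and the per-item Lipschitzness of $J_{\listlen}$), which again produces $\tfrac{C}{p}(\radi_t+\radiTwo_t)\,\E_t\big[\sum_{i\in\act_t}\|\cntx_{i,t}\|_{\infMat_t^{-1}}\big]$ plus lower-order terms. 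Thus, up to failure events, $\E_t[\reg_t]\lesssim(\radi_t(\delta_t)+\radiTwo_t(\delta_t))\,\E_t\big[\sum_{i\in\act_t}\|\cntx_{i,t}\|_{\infMat_t^{-1}}\big]$.

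The final step is the summation over $t$, and this is where I expect the real work to be. The dominant term is $(\radi_\timH(\delta_\timH)+\radiTwo_\timH(\delta_\timH))\sum_{t=1}^\timH\sum_{i\in\act_t}\|\cntx_{i,t}\|_{\infMat_t^{-1}}$, but $\infMat_t$ accumulates only the cascade-examined prefix $\{i\in\act_s:i\le\cmclick_s\}$, whereas the inner sum runs over all $\listlen$ positions, so a naive elliptical-potential bound over all selected features is not directly available. I would bound $\sum_{i\in\act_t}\|\cntx_{i,t}\|_{\infMat_t^{-1}}\le\listlen\max_{i\in\act_t}\|\cntx_{i,t}\|_{\infMat_t^{-1}}$, control $\sum_t\max_{i\in\act_t}\|\cntx_{i,t}\|_{\infMat_t^{-1}}^2$ by a batched elliptical potential lemma — using that position $1$ is always examined, so $\infMat_{t+1}\succeq\infMat_t+\cntx_{\act_t(1),t}\cntx_{\act_t(1),t}\T$, and telescoping $\log(\det\infMat_{T+1}/\det\infMat_1)\le d\log(1+\timH\listlen/(\lambda d))$ — and then apply Cauchy--Schwarz over $t$, giving $\sum_{t}\sum_{i\in\act_t}\|\cntx_{i,t}\|_{\infMat_t^{-1}}=\tilde O(\listlen\sqrt{d\timH})$. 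The subtle point to nail down, and the main obstacle, is that the widths of the deeper, possibly-unexamined positions of $\act_t$ are still controlled by this argument (exploiting the examination rule, or arguing the maximum-width selected item is examined, or an extra coverage condition). Multiplying by $\radi_\timH+\radiTwo_\timH=\tilde O(d)$ gives $\tilde O(d^{3/2}\listlen\sqrt\timH)$. To close, choose $\delta=\LinTSdelta$ and $\delta_t=\delta/2^{\max(1,\lceil\log t\rceil)}$ so that $\sum_t\delta_t\le\delta=O(1/\timH)$; since $\reg_t\le\listlen$ is bounded, the failure events add only $O(\listlen)$, yielding a frequentist bound of order $\bndLinTSO$ uniformly over $\|\cparam\|\le\paramNorm$, and averaging over $\cparam\sim P_0$ gives $\breg(\timH)=\bndLinTSO$.
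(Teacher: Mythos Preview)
Your route is sound but genuinely different from the paper's. You run the full frequentist Abeille--Lazaric machinery --- anti-concentration to get optimism with constant probability $p$, then the $R^{\mathrm{TS}}/R^{\mathrm{RLS}}$ split --- and only at the end average over $\cparam\sim P_0$. The paper instead exploits the Bayesian posterior-sampling identity directly: since in the Bayes-regret expectation $(\cparam,\bact_t)$ and $(\tilde\cparam_t,\act_t)$ are exchangeable given $\hist_t$, one has $\E_t[\cntx_{\bact_t(k),t}\T\cparam]=\E_t[\cntx_{\act_t(k),t}\T\tilde\cparam_t]$, which eliminates $\bact_t$ in one line and leaves the two-term decomposition $\E_t[\cntx_{\act_t(k),t}\T(\tilde\cparam_t-\hat\cparam_t)]+\E_t[\cntx_{\act_t(k),t}\T(\hat\cparam_t-\cparam)]$; each piece is then bounded on the high-probability events $\EvHat{t},\EvTilde{t}$ by $(\radi_t+\radiTwo_t)\|\cntx_{\act_t(k),t}\|_{\infMat_t^{-1}}$ without ever invoking anti-concentration or paying the $1/p$ constant. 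Your approach buys a genuine frequentist guarantee (hence robustness to the choice of $P_0$ beyond the support constraint of \cref{asm:param}), at the price of the optimism argument and its extra constants; the paper's approach is shorter but leans on the posterior-matching identity. Both routes land on the same summation $\sum_t\sum_{k\le\listlen}\|\cntx_{\act_t(k),t}\|_{\infMat_t^{-1}}$, and you are right to flag the mismatch between the cascade-observed updates to $\infMat_t$ and the all-$\listlen$ regret terms as the delicate step; the paper applies the elliptical-potential bound (\cref{prop:self_normalized_determinant}) at that point without dwelling on the mismatch.
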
%
Note that compared to \cref{thm:GaussTS}, the dependence on $L$ is replaced with a $d$ dependence.\footnote{The $O$ operator does not hide any $\timH$ term.}
Compared to frequentist upper bound in \citet{vial2022minimax}, i.e., $\tilde{O}(\sqrt{\timH d(d+\listlen)})$, our regret bound is order-wise sub-optimal, but it does not include huge constants like theirs.
To the best of our knowledge, no lower bound is known for cascading bandits in the linear case.
We prove \cref{thm:LinTS} in \cref{app:LinTS-proofs} starting with a decomposition that separates the posterior sample error and the estimation error.  
Then, we bound each term using ellipsoidal high probability bounds from \citet{abbasi2011improved,abeille2017linear}.

\subsection{Generalized Linear Contextual LTR}\label{sec:TS-GLM}

In a click model, it is natural to consider the clicks as Bernoulli random variables as in \cref{sec:setting}.  
In this section, we employ a \emph{generalized linear model} (e.g. \emph{logistic} model) to model the Bernoulli attractiveness random variables in a contextual setting. 

We model the attractiveness as $\att_{i,t} \sim \Brn(\linkf(\cntx_{i,t}\T \cparam))$ where $\linkf(x)$ is the so-called \emph{link} function, e.g., $\linkf(x)=1/(1+\exp(-x))$ is the Sigmoid function. The reward functions, $\rew$ and $\mrew$, are defined the same as in \cref{eq:Cascade-Model} where $\attMu$ is replaced with $\linkf(\cntx\T\cparam)$, which is the probability of attractiveness for item $i$ at round $t$. We have the following assumption on the reward noise.
\begin{assumption}[Reward]\label{asm:noise-GLM}
     We assume that if $\obsind_{i,t}=1$ then the feedback for item $t$ is $\att_{i,t} = \linkf(\cntx_{i,t}\T \cparam) +\noise_t$, where $\noise_t$ is $\sigma^2$-sub-Gaussian.
\end{assumption}

The pseudo-code of our algorithm for this setting is in \cref{alg:Log-LTR}. We define $\minLink \eqdef \inf_{\cparam\in\cparamSet, \cntx\in\Cntx}
\dot{\linkf}(\cntx\T\cparam)$ where $\dot{\linkf}(z)=\frac{\partial \linkf(z)}{\partial z}$. 
The general structure of the algorithm is the same as before. Therefore, we discuss only the new parts. At round $t$, we estimate $\cparam$ (Line~\ref{alg:GLM:est}) by solving the following equation
\begin{equation}
    \sum_{s=1}^t\sum_{k=1}^\listlen \obsind_{k,t} \left(Y_{\act_s(k),s}-\linkf(\cntx_{\act_s(k),s}\T\hat{\cparam}_t)\right)\cntx_{\act_s(k),s}=0\label{eq:Log-Param-estimate}\;,
\end{equation}
for $\hat{\cparam}_t$, using a \emph{iteratively reweighted least
square} (IRLS) oracle \citep{green1984iteratively}.  
Next, we sample from the Laplace approximation of the posterior \citep[Chapter 4.5.1]{bishop2006pattern} in Line~\ref{alg:GLM:sample}. We provide an alternative algorithm in \cref{sec:log-alter} which takes single Newton steps \citep{gentile2012multilabel} and therefore it is more computationally attractive compared to using IRLS. 

\begin{algorithm}[tb]
   \caption{GLM \ac{TS} for LTR}
    \label{alg:Log-LTR}
\begin{algorithmic}[1]
    \STATE {\bfseries Input:} The regularization parameter $\lambda\in\R^+$, $\paramNorm$, and ${\listlen}$
    \STATE {\bf Initialize:} $\infMat_1\gets \lambda\iden_d$.%
    \FOR{$t=1,\dots,\timH$}{
        \STATE Receive the current context $\cntx_t$.
        \STATE Solve \cref{eq:Log-Param-estimate} for $\hat{\cparam}_t$ \label{alg:GLM:est} \COMMENT{{ Compute the posterior parameter}}
        \STATE Sample $\tilde{\cparam}_{t} \sim \Ngaus(\hat{\cparam}_t, \radi_t^2(\delta_t) \infMat_t^{-1}/\minLink^2)$ \label{alg:GLM:sample} \COMMENT{Laplace posterior sample}
        \STATE $\act_t \in \argmax_{\substack{ a\subset\items\\|a|={\listlen}}}\sum_{i\in a} \cntx_{i,t}\T\tilde{\cparam}_t $ \label{alg:GLM:recom} \COMMENT{{ Recommend the top posterior samples} }
        \STATE Observe $\att_t$ %
        \STATE $\infMat_{t+1}\gets \infMat_t + \sum_{i\in\act_t}\obsind_{i,t}\linkf(\hat{\cparam}_t\T\cntx_{i,t})\big(1-\linkf(\hat{\cparam}_t\T\cntx_{i,t})\big)\cntx_{i,t}\cntx_{i,t}\T$
    }
    \ENDFOR
\end{algorithmic}
\end{algorithm}

\paragraph{\cref{alg:Log-LTR} Analysis:} We add the following assumption on the properties of $\linkf$, and derive a regret bound independent of $\nitems$ as follows.
\begin{assumption}[Link Function]\label{asm:linkFunction}
     The link function, $\linkf:\R\mapsto\R$, is continuously differentiable, Lipschitz with constant $\lipchLink$ and $\minLink > 0$.
\end{assumption}
For example, $\lipchLink=1/4$ for the Sigmoid function, and $\minLink$ depends on $\sup_{\cparam\in\cparamSet, \cntx\in\Cntx}
\cntx\T\cparam$, which by \cref{asm:cntx,asm:param} is at most $\paramNorm$.

\begin{restatable}[\cref{alg:Log-LTR} Regret]{theorem}{LogLTRTwo}
\label{thm:Log-LTR2}
   The Bayes regret of \cref{alg:Log-LTR} under \cref{asm:cntx,asm:param,asm:noise-GLM} satisfies
   \begin{align*}
       \breg(\timH)
        \le \bndLogTSKappa
   \end{align*}
   with probability at least $1-\delta$ where $\delta'=\tfrac{\delta}{4\timH}$. Here $\radiTwo_t(\delta)=\radi_t(\delta)\sqrt{c d \log(c' d/\delta)}$ with $c,c'$ constants such that $\P\left(\|\att_{i,t}-\cntx_{i,t}\T\cparam\|\le \sqrt{c d\log(c' d/\delta)}\right)\geq 1-\delta$. We also show by choosing a proper $\delta$ and $\gamma$ the regret is 
   $ \bndLogTSKappaOrderwise
       $.
\end{restatable}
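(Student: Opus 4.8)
The plan is to reduce the ranked-list GLM regret to a sum of per-position \emph{linear}-predictor regrets and then run the frequentist linear-Thompson-sampling argument of \citet{abeille2017linear} on top of the GLM estimator in \cref{eq:Log-Param-estimate}. Three reductions are chained. First, because the link $\linkf$ is increasing, both $\bact_t$ (optimal under $\cparam$) and $\act_t$ (\cref{alg:Log-LTR}'s list) are the top-$\listlen$ items ordered by the linear predictors $\cntx_{i,t}\T\cparam$ and $\cntx_{i,t}\T\tilde\cparam_t$. Applying \citet[Lemma 7]{Bkveton-22-prior} bounds $\E_t[\reg_t]$ by the sum over the $\listlen$ list positions of the item-level reward gaps, and by \cref{asm:linkFunction} pairing the $k$-th item of $\bact_t$ with a suitable item of $\act_t$ gives $\sum_k\big(\linkf(\cntx_{\bact_t(k),t}\T\cparam)-\linkf(\cntx_{\act_t(k),t}\T\cparam)\big)\le\lipchLink\sum_k\big(\cntx_{\bact_t(k),t}\T\cparam-\cntx_{\act_t(k),t}\T\cparam\big)$. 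So it suffices to bound, for each position $k$, the linear-predictor regret $\Delta_{k,t}\eqdef\max_i\cntx_{i,t}\T\cparam-\cntx_{\act_t(k),t}\T\cparam$; the $\lipchLink$ here, together with a matching lower bound by $\minLink$ used when we invert the GLM design matrix, produces the $\lipchLink/\minLink$ prefactor.

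Next I set up the confidence events, each holding with probability at least $1-\delta'$: (a) the GLM estimate is accurate, $\|\hat\cparam_t-\cparam\|_{\infMat_t}\le\radi_t(\delta')/\minLink$ — obtained by linearising \cref{eq:Log-Param-estimate} via the mean-value theorem, using $\dot\linkf\ge\minLink$ to invert the resulting design matrix, and applying the self-normalized tail bound of \citet{abbasi2011improved} to the noise martingale $\sum_s\sum_{i\in\act_s}\obsind_{i,s}\noise_s\cntx_{i,s}$; (b) the Laplace posterior sample is close to the estimate, $\|\tilde\cparam_t-\hat\cparam_t\|_{\infMat_t}\le\radiTwo_t(\delta')/\minLink$, which is Gaussian concentration of $\Ngaus(\hat\cparam_t,\radi_t^2(\delta_t)\infMat_t^{-1}/\minLink^2)$ with $\radiTwo_t(\delta')=\radi_t(\delta')\sqrt{cd\log(c'd/\delta')}$; and (c) the effective-noise bound $\|\att_{i,t}-\cntx_{i,t}\T\cparam\|\le\sqrt{cd\log(c'd/\delta')}$, needed to relate $\infMat_{t+1}$, assembled from the \emph{estimated} IRLS weights $\linkf(\hat\cparam_t\T\cntx)(1-\linkf(\hat\cparam_t\T\cntx))$, to the linearisation used in (a). On (a)--(c), both $\hat\cparam_t$ and $\tilde\cparam_t$ lie in an $\infMat_t$-ellipsoid of radius $O\!\big((\radi_t(\delta')+\radiTwo_t(\delta')\,d)/\minLink\big)$ about $\cparam$ (the extra $d$ being the price of the weight mismatch in (c)). Following \citet{abeille2017linear}, decompose $\Delta_{k,t}=\big[J_{k,t}(\cparam)-J_{k,t}(\tilde\cparam_t)\big]+\cntx_{\act_t(k),t}\T(\tilde\cparam_t-\cparam)$, with $J_{k,t}$ the optimal position-$k$ predictor value: the second term is at most $\|\cntx_{\act_t(k),t}\|_{\infMat_t^{-1}}\cdot O\!\big((\radi_t(\delta')+\radiTwo_t(\delta')d)/\minLink\big)$ by Cauchy--Schwarz, and the first is controlled by the anti-concentration (optimism-with-constant-probability) lemma — conditioned on $\hist_t$ the Gaussian sample is optimistic with probability $p\ge0.1$, so $\E_t\big[J_{k,t}(\cparam)-J_{k,t}(\tilde\cparam_t)\big]\le p^{-1}\,\E_t\big[\|\cntx_{\act_t(k),t}\|_{\infMat_t^{-1}}\big]\cdot O(\radiTwo_t(\delta')/\minLink)$, which is the origin of the $1/0.1$ factor.

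Summing over $t\in[\timH]$ and $k\in[\listlen]$ then finishes the argument. For the width terms apply the elliptical potential lemma, $\sum_{t=1}^{\timH}\min\{1,\|\cntx_{\act_t(k),t}\|_{\infMat_t^{-1}}\}\le\sqrt{2\timH d\log(1+\timH/\lambda)}$, which produces the first displayed summand $\listlen\frac{\lipchLink}{\minLink}\big(\radi_t(\delta')+\radiTwo_t(\delta')d\big)\sqrt{2\timH d\log(1+\timH/\lambda)}$. Since the anti-concentration step only controls $\E_t[\reg_t]$, to obtain a statement that holds with probability at least $1-\delta$ I apply Azuma--Hoeffding to the martingale $\sum_t(\E_t[\reg_t]-\reg_t)$, whose increments are $O\!\big(\listlen\,\radiTwo_t(\delta')/(0.1\,\minLink\sqrt\lambda)\big)$ using $\|\cntx\|_{\infMat_t^{-1}}\le1/\sqrt\lambda$; this gives the second summand $\listlen\frac{2\radiTwo_t(\delta')}{0.1\,\minLink}\sqrt{\tfrac{8\timH}{\lambda}\log\tfrac4\delta}$. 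A union bound over the $\timH$ rounds and over events (a)--(c) plus the martingale event accounts for the factor $4$ in $\delta'=\delta/(4\timH)$. For the order-wise claim, substitute the logarithmic-order values $\radi_t(\delta')=\tilde O(\sigma^2\sqrt d)$ and $\radiTwo_t(\delta')=\tilde O(\sigma^2 d)$, take $\delta$ inverse-polynomial in $\timH$, and retain the dominant term to get $\tilde O\!\big(\listlen\tfrac{\lipchLink}{\minLink}\sigma^2 d\sqrt{\timH d}\big)$.

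The main obstacle is porting the linear-TS analysis to the \emph{generalized} linear estimator with its \emph{Laplace-approximate} posterior. Two points carry the difficulty: first, the mean-value linearisation of \cref{eq:Log-Param-estimate} yields a design matrix weighted by $\dot\linkf$ at unknown interpolation points, which must be compared with the algorithm's $\infMat_t$ built from \emph{estimated} weights — controlling this mismatch is exactly what forces event (c) and the extra $\radiTwo_t(\delta')d$ correction; second, the anti-concentration lemma of \citet{abeille2017linear} is stated for Gaussian posteriors over linear rewards, so one must verify it still yields a constant lower bound $p\ge0.1$ after composing with the increasing link $\linkf$ and the cascade top-$\listlen$ selection. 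Once these two points are settled, the cascade reduction, the Cauchy--Schwarz widths, the elliptical potential lemma, and the Azuma step are routine.
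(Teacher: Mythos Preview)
Your proposal follows essentially the same blueprint as the paper: cascade reduction via \citet[Lemma~7]{Bkveton-22-prior}, the GLM confidence ellipsoid from \cref{prop:GLM-Bounds} (i.e., \citet[Appendix~F]{abeille2017linear}), and then the linear-TS machinery of \citet{abeille2017linear} --- the $R^{\text{TS}}/R^{\text{RLS}}$ split, the anti-concentration constant $p\ge 0.1$, the elliptical potential lemma, and finally their Proposition~2 (Azuma) for the high-probability statement with $\delta'=\delta/(4\timH)$.

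Two differences are worth flagging. First, you apply the $\lipchLink$-Lipschitz bound \emph{upfront} to reduce entirely to linear-predictor regret and then run the linear analysis; the paper instead keeps $\linkf$ in the decomposition, adds and subtracts $\linkf(\cntx_{\act_t(k),t}\T\tilde\cparam_t)$, bounds the prediction-error piece by Lipschitz plus Cauchy--Schwarz, and handles the TS piece via a case split on the sign of $\sup_{\cntx}\cntx\T\cparam-\sup_{\cntx}\cntx\T\tilde\cparam_t$ (using $\lipchLink$ when positive, $\minLink$ when negative) before invoking the $R^{\text{TS}}$ bound. Both routes land on the same two summands; the paper's case split is precisely what resolves your ``obstacle~2'' about anti-concentration after composing with the link. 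Second, your ``event~(c)'' and the claim that a weight-mismatch between the IRLS-weighted $\infMat_t$ and the linearised design forces the extra $\radiTwo_t(\delta')d$ factor is not how the paper proceeds: the paper invokes \cref{prop:GLM-Bounds} directly in the algorithm's $\infMat_t$-norm without any separate weight-mismatch correction, and the constants $c,c'$ in $\radiTwo_t$ come from the concentration requirement on the Gaussian TS perturbation (Definition~1 in \citet{abeille2017linear}), not from a reward-noise or design-matrix comparison. So your event~(c) is an unnecessary detour --- the argument goes through with just (a) and (b).
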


We prove this in \cref{app:log-proofs}, using an extension of the linear \ac{TS} regret bounds and the definition of $\minLink$ and $\lipchLink$. 
To the best of our knowledge, there is no prior work on cascading bandits with comparable Bayes regret bound. \citet{santara22a} developed an algorithm for a routing problem in a cascading bandit setting. Their regret bound is $O\Big(d\log(1+\timH)\sqrt{\timH e^{2 \paramNorm} (\listlen+d\log(1+\tfrac{\listlen d \timH}{\delta})}\Big)=\tilde{O}\big(d \timH e^ \paramNorm \sqrt{ \timH(\listlen+d)}\big)$ for $\delta\in(0,1/e]$ with probability $1-\delta$. This is better than our bound by a factor of $\sqrt{\listlen}$, but it has an exponential dependence on $\paramNorm$ which could be as bad as $1/\minLink$ (see Remark 1 in \citet{santara22a}).

In prior works on GLM bandits, $1/\minLink$ usually appears in the regret bounds. However, depending on $\Cntx$, this term could be arbitrarily large. As such, it is desirable if our regret analysis does not contain $\minLink$. We employ an information-theoretic approach as in \citet{neu2022lifting} to derive the following result for our LTR setting. We define a new information ratio and bound it for each item in the recommendation list, which readily gives an upper bound on the Bayes regret of our algorithm. We focus on the Sigmoid function for the simplicity of the proofs.
\begin{restatable}[\cref{alg:Log-LTR} Regret without $\minLink$]{theorem}{LogLTROne}
\label{thm:LogLTR1}
   If \cref{asm:cntx,asm:param,asm:noise-GLM} hold and $\linkf$ is the Sigmoid function, then the Bayes regret of \cref{alg:Log-LTR} satisfies
   \begin{align*}
       \breg(\timH)
        \le \listlen \sqrt{2 \nitems (d\log(2\paramNorm \timH +1) +1)} =\bndLogTS
   \end{align*}
\end{restatable}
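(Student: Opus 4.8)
The plan is to carry out the information-theoretic analysis of Thompson sampling in the ``lifted'' form of \citet{neu2022lifting}: instead of tracking information about the optimal list $\bact_t$ -- whose entropy is huge in a ranking problem and which behaves badly because the contexts $\cntx_t$ change every round -- we track information about a finite discretization of the latent parameter $\cparam$. Concretely, fix an $\varepsilon$-net $\pSC$ of the parameter ball $\{\cparam:\|\cparam\|\le\paramNorm\}$ with resolution $\varepsilon=1/\timH$, and let $\hat\cparam_\varepsilon$ be the net point closest to $\cparam$; since $|\pSC|\le(1+2\paramNorm\timH)^d$, the entropy satisfies $H(\hat\cparam_\varepsilon)\le d\log(2\paramNorm\timH+1)+1$ (the additive $1$ accommodating the covering slack). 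This is the object whose entropy will become the $d\log(2\paramNorm\timH+1)+1$ factor in the bound.

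Next I would assemble the two remaining ingredients. First, the cascade reduction of \citet[Lemma 7]{Bkveton-22-prior} bounds the list regret by a sum of item/slot regrets, $\reg_t\le\sum_{k\in\act_t}(\mrew_{\bact_t(k),t}-\mrew_{k,t})$; feeding this through the usual squared-regret / Cauchy--Schwarz steps (over the $\listlen$ positions and over the $\timH$ rounds) is what pulls out the leading factor $\listlen$. Second, the information-ratio argument gives, schematically, $\breg(\timH)\le\listlen\sqrt{\bar\Gamma\,\timH\,H(\hat\cparam_\varepsilon)}$, where $\bar\Gamma$ is a uniform bound on the lifted per-slot information ratio $\E_t[\text{slot regret}]^2/\info_t\!\big(\hat\cparam_\varepsilon;(\act_t,\att_t)\big)$ (with $\info_t$ the conditional mutual information given $\hist_t$), and the chain-rule budget $\sum_t\info_t(\hat\cparam_\varepsilon;(\act_t,\att_t))\le H(\hat\cparam_\varepsilon)$ closes the telescoping. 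Substituting the claimed value $\bar\Gamma\le2\nitems$ then yields the stated bound, the $\nitems$ being the number of items playing the role of ``arms'' once we have reduced attention to a single slot.

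The main obstacle is establishing $\bar\Gamma\le2\nitems$ for the Sigmoid cascade model. For this I would: (i) invoke the Thompson-sampling identity that, conditionally on $\hist_t$, the played slot-item and the optimal slot-item are equidistributed, to rewrite the expected slot regret as a posterior-weighted sum over items $i\in\items$; (ii) convert squared reward gaps into information gain using the Bernoulli inequality $\binE(\Brn(p)\|\Brn(q))\ge2(p-q)^2$ together with the $\tfrac14$-Lipschitzness of $\linkf$, crucially performing this conversion in \emph{probability} space -- i.e.\ comparing $\linkf(\cntx\T\cparam)$ to $\linkf(\cntx\T\hat\cparam_\varepsilon)$ rather than comparing $\cparam$ to $\hat\cparam_\varepsilon$ -- which is exactly what keeps $1/\minLink$ out of the final bound; and (iii) control the cascade partial observability consistently, noting that the indicators $\obsind_{i,t}$ weight both the slot-regret terms and the observed feedback, and account for the $O(\varepsilon)=O(1/\timH)$ mismatch incurred by measuring information about $\hat\cparam_\varepsilon$ rather than $\cparam$, which is summable to $O(1)$ over the horizon. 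A final bookkeeping point: the clean information-theoretic statement is most naturally proved for exact posterior sampling, so I would either prove it for an idealized version of \cref{alg:Log-LTR} with exact sampling or verify that the Laplace--Gaussian sample of Line~\ref{alg:GLM:sample} is close enough that the equidistribution identity survives with a negligible correction.
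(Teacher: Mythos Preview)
Your plan matches the paper's proof closely: the paper also reduces to slot regrets via \citet[Lemma~7]{Bkveton-22-prior}, defines a per-slot lifted information ratio $\infoRatio_{t,k}=(\E_t[\text{slot regret}])^2/\info_t(\cparam;\att_{t,\act_t(k)})$, applies Cauchy--Schwarz over $t$ and $k$, and closes with the covering bound of \citet[Lemma~6]{neu2022lifting} at resolution $\varepsilon=1/\timH$.

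The one substantive difference is in your step~(ii). The paper does \emph{not} bound $\infoRatio_{t,k}$ via Pinsker; it instead runs the Fenchel--Young argument of \citet[Lemma~5/Proposition~1]{neu2022lifting}: writing the Legendre conjugate $\binE^*(u\|q)=\log(1+q(e^u-1))\le q(u+u^2/2)$ for $u\le 0$, and optimizing an auxiliary $\eta$, it obtains the data-dependent bound $\infoRatio_{t,k}\le 2\sum_{i\in\items}\E_t[\posMeRew_t(\cntx_t,i)]\le 2\nitems$ (their Lemma~8). Your Pinsker route $\binE(p\|q)\ge 2(p-q)^2$ combined with the standard Russo--Van~Roy decomposition would also deliver an $O(\nitems)$ ratio, so the final orders agree; the Fenchel--Young version just buys the refinement $\sum_i\posMeRew_i$ in place of $\nitems$. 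A second cosmetic difference: the paper keeps the continuous $\cparam$ throughout and only invokes the covering at the very end inside the information budget, whereas you discretize to $\hat\cparam_\varepsilon$ up front; both are standard and the $O(\varepsilon\timH)=O(1)$ slack you flag is exactly the $+\,\varepsilon\timH$ term in their Lemma~6. Your final bookkeeping remark is apt: the paper's proof silently uses the exact-TS identity $(\cparam,\bact_t)\eqdist_t(\tilde\cparam_t,\act_t)$ without addressing the Laplace approximation in Line~\ref{alg:GLM:sample}, so the idealized-sampler reading you propose is in fact what the paper implicitly does.
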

Our information-theoretic proof of this result is outlined in \cref{app:log-proofs-noKappa}. We note that the dependence on $\nitems$ is not desirable in a contextual setting, but this bound is based on the state-of-the-art results for GLM bandits as it avoids $\minLink$.

\section{\uppercase{Related Work}}\label{sec:related-works}

\paragraph{Cascading bandits:} The cascade click model was
introduced in \citet{richardson2007predicting,craswell2008experimental} and applied to online LTR in \citet{kveton2015cascading}. 
\citet{Zhong-TS_cascade-2021} proposed \algTScascade for non-contextual cascading bandits which is a semi-Thompson sampling (TS) algorithm (dependent posterior samples used) with Gaussian updates which start with an uninformative Gaussian prior (flat prior). They present a frequentist regret upper bound for this algorithm, however, their regret bound includes a huge constant (Lemma 4.3 and the last equation in Section 4 therein).
Our \cref{alg:Gauss-LTR} improves on their algorithm by applying an exact \ac{TS} Gaussain update. Our analysis also improves the regret bound, although it is a Bayes regret. 

\paragraph{Gaussian TS:}
\citet{agrawal2013further} showed the problem-independent regret bound of Gaussian \ac{TS} for a $[0,1]$ reward ${\listlen}$-armed bandit problem is $O(\sqrt{{\listlen} \timH \log {\listlen}})$ for $\timH>{\listlen}$. \citet{bubeck2013prior} used refined confidence bounds and improved this bound by removing the $\log {\listlen}$ factor.  \citet{abeille2017linear} revisited linear \ac{TS} for contextual bandits under the frequentist setting.  %
In a related work, \citet{liu2022gaussianImagination} quantified the amount of miss-specification in Gaussian \ac{TS} applied to Bernoulli bandits. LTR click models usually assume Bernoulli click random variables which is $[0,1]$-bounded. Similarly, we introduce a Gaussian \ac{TS} for our LTR problem.
We propose \cref{alg:LinGauss-LTR} and the first Bayesian regret analysis for Linear \ac{TS} in the cascading model. \citet{Zhong-TS_cascade-2021} proposed \algLinTSCas for linear cascade model and showed $O(\min\{\sqrt{d}, \sqrt{\log L}\} \listlen d \sqrt{\timH} (\log \timH)^{3/2})$ frequentist regret where the constant is $(1+\lambda)(1+8\sqrt{\pi} e^{7/(2\lambda^2)})$ which could be very large. \citet{faury2020improved-logistic} proposed the state-of-the-art UCB algorithm for logistic bandits which achieves $O(\sqrt{\minLink \timH} d \log \timH)$ regret bound. This result improves on the classic result of \citet{filippi2010parametric} removing a $\sqrt{\minLink}$ factor. Depending on $\Cntx$, this number could be arbitrarily large. \citet{neu2022lifting} provide a regret bound for \ac{TS} over logistic bandits which is $\tilde{O}(\sqrt{d \nitems \timH})$ for $\nitems$ arms. This bound is $\sqrt{\nitems}$ worse than the above but it is $\minLink$ free which makes it viable in that sense.

\paragraph{Related algorithms:} The state-of-the-art prior works comparable to our work include (i) prior-free online LTR algorithms like \algcascadeKLUCB, \algcascadeUCBOne from \citet{kveton2015cascading}, and \algTopRank \citep{lattimore2018toprank}, (ii) the only prior-based adaptive algorithms, \algBUCB and \algTS \citep{Bkveton-22-prior}, and finally, (3) non-adaptive prior-based algorithms where the items are ranked according to their maximum-a-posteriori (MAP) attraction probabilities estimated from the same prior
as in \algBUCB and \algTS, such as \algEnsemble in \citep{Bkveton-22-prior}.

\algcascadeLinUCB is a variant of the LinUCB algorithm applied to the linear cascade bandits setting by \citep{zong2016cascading}. Note that {\tt CascadeLinTS} from that paper is almost the same as our \algLinTSLTR, except the calculation of the variance of posterior. However, they do not have an analysis for this algorithm. In the linear setting, however, the state-of-the-art is \algcascadeWOFUL \citep{vial2022minimax} which uses Bernstein and Chernoff confidence bounds to develop a variance-aware algorithm. \algLinTSCas \citep{Zhong-TS_cascade-2021} is again similar to our \algLinTSLTR but with different variance calculations and it only has frequentist analysis with huge constants.

There are several other click models considered in the bandit literature. Recently, \citet{ermis2020learning-PBM} proposed a linear UCB and a linear \ac{TS} algorithm for the \emph{position based} click model (PBM). They, however, do not analyze the regret of these algorithms.

\paragraph{GLM bandits:} Several recent works study GLM and especially logistic bandits. \citet{dong2019performance} analyzed the performance of \ac{TS} for logistic bandits using an information-theoretic approach which results in a regret bound without $\minLink$ in it but with other potentially large constants. \citet{neu2022lifting} improved on this and proposed a regret bound free of large constants. In \cref{thm:LogLTR1} we take a similar approach to this. \citet{dumitrascu2018pg} proposed an improvement over the Laplace estimate of the posterior for logistic \ac{TS} but did not bound its regret. Several other recent works including \citet{faury2020improved-logistic,abeille2021instance,faury2022jointly}  study UCB and optimism in face of uncertainty (OFU) based algorithms which we could further consider for future work.

\paragraph{Offline initialization:} Considering the cold-start issues, we can learn the prior information from the historical data and use it in near-optimal algorithms like in \citet{Bkveton-22-prior}. The Web30K experiment in \citet{Bkveton-22-prior} practically does this.

\begin{figure*}[t]
\centering
    \begin{minipage}[t]{.32\textwidth}
    \includegraphics[width=\textwidth]{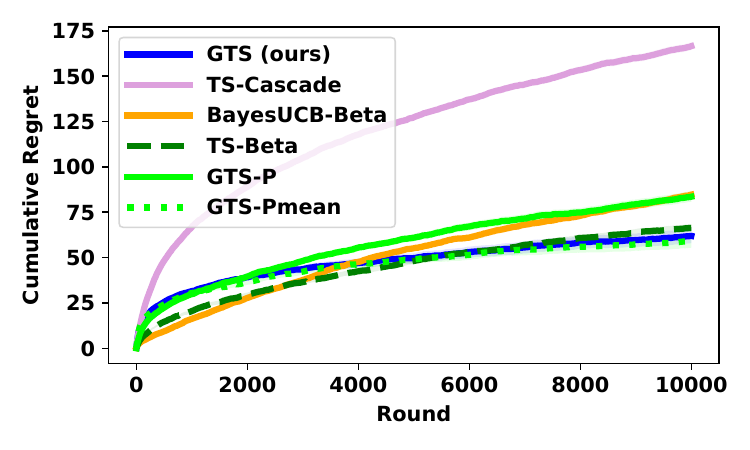}
    \caption{Synthetic non-contextual setting.}
     \label{fig:stand}%
    \end{minipage}
\hfill
     \begin{minipage}[t]{.32\textwidth}
    \includegraphics[width=\textwidth]{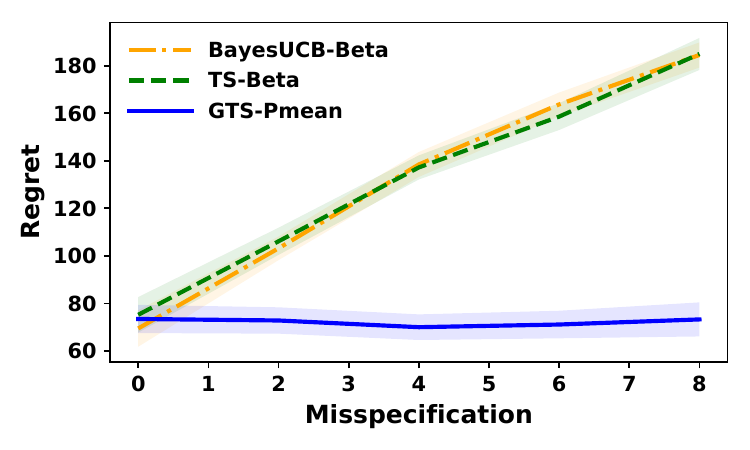}
    \caption{Synthetic non-contextual setting with prior misspecification.}
    \label{fig:stand-misspec} %
    \end{minipage}
\hfill
    \begin{minipage}[t]{.32\textwidth}
    \includegraphics[width=\textwidth]{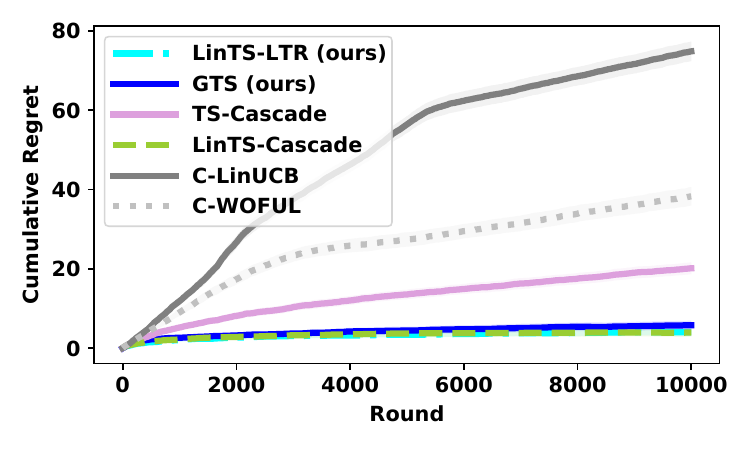}
    \caption{Linear setting experiment.}
    \label{fig:lin} %
    \end{minipage}
\end{figure*}

\section{\uppercase{Experiments}}\label{sec:exp}

In this section, we aim at evaluating our algorithms empirically and compare them to prior work. The goal is to validate the theory in practice using synthetic and real-world data. In particular, we are trying to answer the following research questions;
\begin{enumerate}   
    \item How does our non-contextual algorithm \cref{alg:Gauss-LTR} compare to prior work on non-contextual LTR?
    We investigate this in \cref{fig:stand}.
    
    \item How does prior initialization and misspecification affect our non-contextual algorithm? \cref{fig:stand-misspec} and  further experiments of \cref{fig:stand-subset,fig:prior-standard-cut,fig:prior-standard} in \cref{app:further-experiments} answer this.
    
    \item For a linear or logistic reward model, how does \cref{alg:LinGauss-LTR,alg:Log-LTR} compare to prior work in terms of regret? We conduct the corresponding experiments for the linear and logistic model in \cref{fig:lin,fig:log}, respectively. \cref{fig:lin-subset,fig:log-cut} experiments in \cref{app:further-experiments} explore this further.
    
    \item How would our algorithms perform if the theoretical assumptions are violated? We explore this using two datasets in \cref{fig:web30k,fig:Istella}. Further experiments \cref{fig:web30k-cut,fig:web30k-all,fig:Istella-cut,fig:Istella-all} are in \cref{app:further-experiments}.
\end{enumerate}

The default setting for our synthetic experiments is $L=30$, $K=3$, $T=10000$, $S=1$, and $\lambda=1/10^{4}$. \jtodo{Hyper-parameters tuning discussion}Our preliminary analysis shows that the few hyper-parameters ($\lambda, S$) of our algorithms do not change the behavior much and thus we did not include a tuning study. This is also in line with the regret bounds which show the effect of the hyper-parameters is negligible. We compare the cumulative regret of the algorithms over 100 simulation replications. In the synthetic experiments, each replication is a new instance sampled from a fixed prior.

\begin{figure*}[tb]
\centering
    \begin{minipage}[t]{.32\textwidth}
    \includegraphics[width=\textwidth]{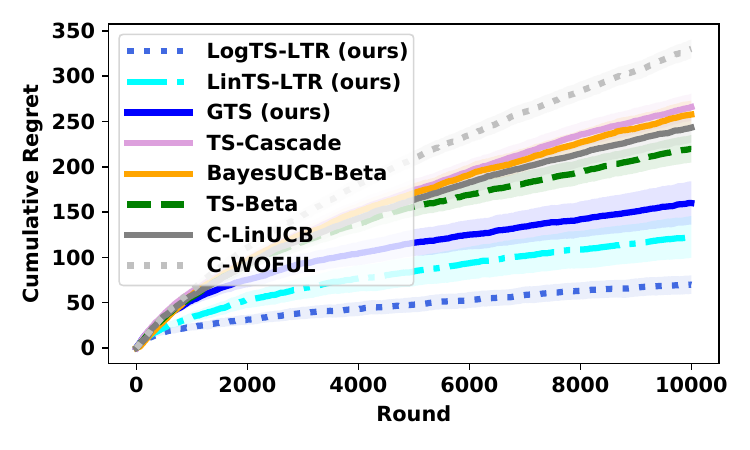}
    \caption{Logistic setting experiment.}
    \label{fig:log} %
    \end{minipage}
\hfill
    \begin{minipage}[t]{.32\textwidth}
     \includegraphics[width=\textwidth]{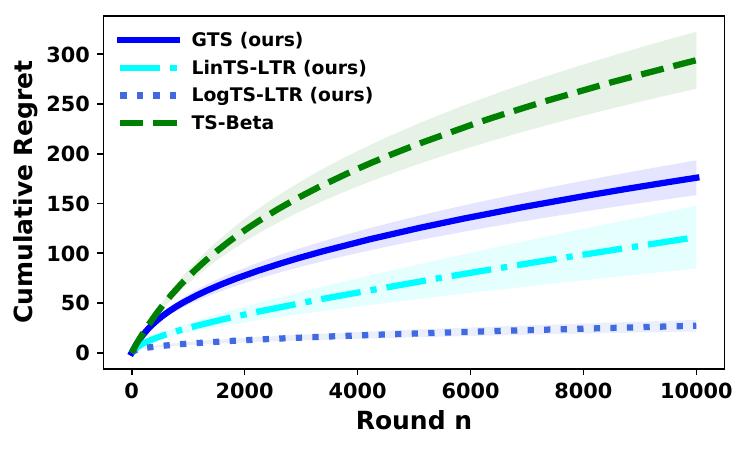}
     \caption{Web30k experiment with $\listlen=10$.}
     \label{fig:web30k} %
    \end{minipage}
\hfill
    \begin{minipage}[t]{.32\textwidth}
     \includegraphics[width=\textwidth]{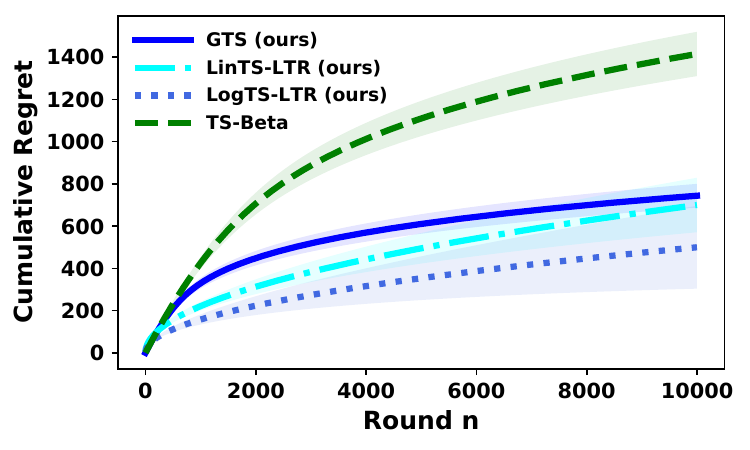}
     \caption{Istella experiment with $\listlen=10$.}
     \label{fig:Istella} %
    \end{minipage}
\end{figure*}

Our \cref{alg:Gauss-LTR} is shown as \algGTS, \cref{alg:LinGauss-LTR} as \algLinTSLTR, and \cref{alg:Log-LTR} with Sigmoid link function as \algLogTSLTR. Here, \algGPTSmean is \algGTS with a prior mean initialized to the mean of the offline data %
(instead of $\Ngaus(0,1)$ uninformative prior). \algGPTS uses both the mean and variance of the offline data for the prior initialization.

\subsection{Synthetic Experiment: Non-contextual Setting}\label{sec:exp-stand}

In the non-contextual setting, our baselines are \algTS (shown as \algTSB to emphasis beta updates versus Gaussian) and \algBUCB (shown as \algBUCBB) from \citet{Bkveton-22-prior} which are the state-of-the-art and shown to outperform the other algorithms. These algorithms use the exact prior so we call them \emph{prior-informed} algorithms. The rest of the algorithms here which do not use the prior are called \emph{prior-agnostic}. For instance, we include \algTScascade which is the closest algorithm to \algGTS in the literature from \citet{Zhong-TS_cascade-2021} and uses dependent posterior samples in a semi-TS algorithm (see \cref{sec:related-works}). 

For the non-contextual setting, in \cref{fig:stand} (and \cref{fig:stand-subset,fig:prior-standard-cut,fig:prior-standard} in \cref{app:further-experiments}), we let $(\beta_{1,i}, \beta_{2,i})$ be the Beta prior distribution parameters (used by \algTS and \algBUCB). We let $\beta_{2,i} = 10$ and sample $\beta_{1,i}$ from [10] uniformly at random,
for all items $i$. We sample $(\beta_{1,i})_{i=1}^{\nitems}$ for 20 times, and for
each one sample the attraction probability of items as $\attMu_i \sim \Betad(\beta_{1,i},\beta_{2,i})$ 20 times. Each algorithm is simulated on these 400 bandit instances. This procedure generates attraction probabilities between 0.09 and 0.5, so no item is overly attractive.

\cref{fig:stand} shows the cumulative regret for all the algorithms on a cascade model. We can observe that \algGTS competes closely with \algTSB and \algBUCBB, despite not having access to the true prior. Also, \algTScascade seems to underperform significantly. We conjecture this could be due to its dependent posterior samples which could pollute its posterior statistics (not MAP estimates). In \cref{fig:stand-subset} of \cref{app:further-experiments} we removed \algTScascade to zoom in on the difference between \algGTS's. \algGPTSmean and \algGPTS correct \algGTS at the beginning as they start with a better prior mean, the gain, however, is not significant which shows our \ac{TS} algorithms are highly adaptive. \cref{fig:prior-standard-cut,fig:prior-standard} in \cref{app:further-experiments} confirm this observation further. 
Also, \algGPTS shows slightly worse performance which seems to highlight a drawback of initializing a Gaussian \ac{TS} with a beta prior. 

In \cref{fig:stand-misspec}, the true prior is
$Beta(1, 10)$ but the prior fed to the algorithms is $\Betad(1 + c, 10 - c)$, with $c \in [0, 8]$. Therefore, when $c > 0$, the prior is misspecified. We plot the final cumulative regret of the algorithms at $\timH=1000$. This figure illustrates that our algorithm\footnote{\algGPTSmean or \algGTS both perform similarly here so we only keep \algGPTSmean} is robust against the prior misspecification, while the performance of prior-dependant algorithms degrades, as expected.

\subsection{Synthetic Experiment: Linear Setting}\label{sec:exper-lin}

In the linear setting, the state-of-the-art baseline algorithm is \algcascadeWOFUL \citep{vial2022minimax}, labeled as \algCWOFUL, which deploys a Bernstein UCB algorithm and adapts to the variance of the rewards. 
We removed \algBUCBB in some experiments as it has almost the same performance as \algTSB. 

In this section, we sample $\cparam\in[0,1]^d$ and $\cntx_{i,t}\in[0,1]^d$ from a uniform distribution and normalize them to unit norm. This aligns with the experimental setup of the linear LTR algorithm in prior work \citep{Zhong-TS_cascade-2021,vial2022minimax}. 

\cref{fig:lin} demonstrate the cumulative regret of the state-of-the-art linear algorithm and our \algLinTSLTR. It shows that our algorithm and \algLinTSCas \citep{Zhong-TS_cascade-2021} outperform the prior work. \cref{fig:lin-subset} in \cref{app:further-experiments} separates the underperforming algorithms and illustrates that \algLinTSCas and \algLinTSLTR outperform all others including \algGTS, showcasing the benefit of including the contextual information.

\subsection{Synthetic Experiment: Logistic Setting}

For the logistic setting, as far as we know there are no comparable prior works for the cascade model. We include \algcascadeLinUCB (shown as \algCLinUCB) for completeness in the experiments, although it is empirically shown to underperform \algcascadeWOFUL. In this section, we sample $\cparam$ and $\cntx_i$ in the same way as in \cref{sec:exper-lin}, but set $\attMu_i=\linkf(\cparam\T\cntx_i)$. We set $\beta_{2,i}=10$ and $\beta_{1,i}=10\attMu_i/(1-\attMu_i)$ so the mean of Beta prior of item $i$ is $\attMu_i$.\todoOM{$\attMu_i$?}

We show the cumulative regret of our algorithms in \cref{fig:log}, where our algorithm's superior performance is highlighted. A zoomed-in version of this in \cref{fig:log-cut} in \cref{app:further-experiments} shows \algLogTSLTR outperforms other algorithms (including the prior-informed ones). Moreover, even when considering only the first 1000 steps, \algLogTSLTR still outperforms the rest, showing that it is highly adaptive.

\subsection{Real-world Experiments} 

In this section, we first experiment with the Microsoft Learning to Rank
{\bf Web30K} dataset\footnote{\url{https://www.microsoft.com/en-us/research/project/mslr/}} \citep{qin2013introducing}, which contains
$\sim$18000 training and $\sim$6000 test queries, with an average of $L = 120$ documents per query. Each query and document pair has 136 features which we normalize and take as the context. We follow the setting in \citet{Bkveton-22-prior}. In particular, priors are generated from this dataset by training $M = 10$ state-of-the-art deep-learning LTR model \citep{qin2021neural}, each on a randomly selected training subset. This gives $M$ scores for each query-document pair $i$, $(s_{i,j})_{j=1}^M\in[0,1]$. Then, $\Betad(\sum_{j=1}^m s_{i,j}, \sum_{j=1}^m (1-s_{i,j}) )$ is used as the Beta prior. See \citet[Section 7.4]{Bkveton-22-prior} for more details. 

In \cref{fig:web30k} we depict the regret of the most competitive algorithms in this dataset for 50 simulation replications on each query in the test dataset. We observe that \algGTS\ has superior performance compared to \algTSB. Also, \algLogTSLTR outperforms other algorithms by using the context and its nonlinear power. It is interesting to see how \algLinTSLTR performs better than \algGTS but worse than \algLogTSLTR, as expected. \cref{fig:web30k-cut} in \cref{app:further-experiments} shows a closer look of this result. \cref{fig:web30k-all} in \cref{app:further-experiments} shows the performance of other algorithms that underperform the ones here for completeness. %

Next, we experiment with {\bf Istella}.\footnote{\url{http://blog.istella.it/istella-learning-to-rank-dataset/}} This is one of the largest publicly available learning-to-rank datasets, particularly useful for large-scale experiments on the efficiency and scalability of LETOR solutions. It is composed of 33,018 queries and 220 features representing each query-document pair. It consists of 10,454,629 examples (query-document) labeled with relevance judgments ranging from 0 (irrelevant) to 4 (perfectly relevant). The average number of per-query examples is $L=316$. This dataset is split into train and test sets according to an 80\%-20\% scheme. 

\cref{fig:Istella} shows the results of our Istella experiment. We illustrate the regret of the most competitive algorithms in this dataset for 50 simulation replications on each query in the test dataset. We can see that \algLogTSLTR still outperforms \algLinTSLTR, \algGTS, and \algTSB, but by a smaller margin. We also observe the higher variance in this experiment compared to Web30K. Our conjecture is that this is due to the high sparsity of the features in this dataset. \cref{fig:Istella-cut} in \cref{app:further-experiments} shows a closer look of this result. \cref{fig:Istella-all} in \cref{app:further-experiments} shows that \algcascadeWOFUL underperforms the other algorithms here.

\jtodo{add more detail to real world XPs, Reviewer 1}
In summary, the main finding in this section is that \algLogTSLTR outperforms all other
4 algorithms (cf. \cref{fig:web30k,fig:Istella}). This suggests that these real-world settings could have common non-linear characteristics with the logistic setting (\cref{fig:log}), even though we do not control their true reward.

\section{CONCLUSION}

We proposed highly adaptive Gaussian \ac{TS} algorithms for online learning to rank that handle prior misspecification. Our algorithms exploit contextual information. We extend prior work on cascading bandit to new forms of relevance feedback other than clicks. In particular, our linear model handles scalar feedback. Our Bayes regret upper bounds are the first of their own sort, and in the non-contextual setting our bound is near-optimal. Finally, our extensive experiments on synthetic and real-world datasets demonstrate the efficacy of our algorithms.

Prior-dependent regret bounds are of interest for future work. We can achieve this type of bounds using the information-theoretic analysis in \citet{liu2022gaussianImagination,Lu-InfoBds-19}. Extending our work to other click models such as \emph{dependant click model} (DCM) \citep{guo2009efficient} in the contextual framework seems promising \citep{liu2018contextual-GL-CDCM,santara22a,ermis2020learning-PBM}. \citet{vial2022minimax} developed a GLM bandits algorithm based on UCB for DCM and it would be interesting to see how our \ac{TS} algorithm would compare to it in DCM. Providing a lower bound for the contextual setting is of interest as well.

Meta-learning the prior in a hierarchical structure \citep{hong2022deep} over the queries (cluster the queries) could potentially alleviate the cold-start problem too. Approximate posterior sampling algorithms \citep{dumitrascu2018pg} generalize our \ac{TS} algorithm to a vast spectrum of feedback distribution beyond Gaussian and Bernoulli which we deal with here. Improved alternatives for posterior sampling such as Feel good \ac{TS} \citep{zhang2022feel} and Maillard Sampling algorithm \citep{bian2022maillard} have the potential to further improve the performance of our algorithms.

\bibliography{refs}

\begin{thebibliography}{}

\bibitem[Abbasi-Yadkori et~al., 2011]{abbasi2011improved}
Abbasi-Yadkori, Y., P{\'a}l, D., and Szepesv{\'a}ri, C. (2011).
\newblock Improved algorithms for linear stochastic bandits.
\newblock {\em Advances in neural information processing systems}, 24.

\bibitem[Abeille et~al., 2021]{abeille2021instance}
Abeille, M., Faury, L., and Calauz{\`e}nes, C. (2021).
\newblock Instance-wise minimax-optimal algorithms for logistic bandits.
\newblock In {\em International Conference on Artificial Intelligence and
  Statistics}, pages 3691--3699. PMLR.

\bibitem[Abeille and Lazaric, 2017]{abeille2017linear}
Abeille, M. and Lazaric, A. (2017).
\newblock Linear thompson sampling revisited.
\newblock In {\em Artificial Intelligence and Statistics}, pages 176--184.
  PMLR.

\bibitem[Agrawal and Goyal, 2013]{agrawal2013further}
Agrawal, S. and Goyal, N. (2013).
\newblock Further optimal regret bounds for thompson sampling.
\newblock In {\em Artificial intelligence and statistics}, pages 99--107. PMLR.

\bibitem[Audibert and Bubeck, 2010]{JMLR:v11:audibert10a}
Audibert, J.-Y. and Bubeck, S. (2010).
\newblock Regret bounds and minimax policies under partial monitoring.
\newblock {\em Journal of Machine Learning Research}, 11(94):2785--2836.

\bibitem[Bian and Jun, 2022]{bian2022maillard}
Bian, J. and Jun, K.-S. (2022).
\newblock Maillard sampling: Boltzmann exploration done optimally.
\newblock In {\em International Conference on Artificial Intelligence and
  Statistics}, pages 54--72. PMLR.

\bibitem[Bishop and Nasrabadi, 2006]{bishop2006pattern}
Bishop, C.~M. and Nasrabadi, N.~M. (2006).
\newblock {\em Pattern recognition and machine learning}, volume~4.
\newblock Springer.

\bibitem[Bubeck and Liu, 2013]{bubeck2013prior}
Bubeck, S. and Liu, C.-Y. (2013).
\newblock Prior-free and prior-dependent regret bounds for thompson sampling.
\newblock {\em Advances in neural information processing systems}, 26.

\bibitem[Craswell et~al., 2008]{craswell2008experimental}
Craswell, N., Zoeter, O., Taylor, M., and Ramsey, B. (2008).
\newblock An experimental comparison of click position-bias models.
\newblock In {\em Proceedings of the 2008 international conference on web
  search and data mining}, pages 87--94.

\bibitem[Dong et~al., 2019]{dong2019performance}
Dong, S., Ma, T., and Van~Roy, B. (2019).
\newblock On the performance of thompson sampling on logistic bandits.
\newblock In {\em Conference on Learning Theory}, pages 1158--1160. PMLR.

\bibitem[Dumitrascu et~al., 2018]{dumitrascu2018pg}
Dumitrascu, B., Feng, K., and Engelhardt, B. (2018).
\newblock Pg-ts: Improved thompson sampling for logistic contextual bandits.
\newblock {\em Advances in neural information processing systems}, 31.

\bibitem[Ermis et~al., 2020a]{ermis2020learning}
Ermis, B., Ernst, P., Stein, Y., and Zappella, G. (2020a).
\newblock Learning to rank in the position based model with bandit feedback.
\newblock In {\em Proceedings of the 29th ACM International Conference on
  Information \& Knowledge Management}, pages 2405--2412.

\bibitem[Ermis et~al., 2020b]{ermis2020learning-PBM}
Ermis, B., Ernst, P., Stein, Y., and Zappella, G. (2020b).
\newblock Learning to rank in the position based model with bandit feedback.
\newblock In {\em Proceedings of the 29th ACM International Conference on
  Information \& Knowledge Management}, pages 2405--2412.

\bibitem[Falk, 2019]{falk2019practical}
Falk, K. (2019).
\newblock {\em Practical recommender systems}.
\newblock Simon and Schuster.

\bibitem[Faury et~al., 2020]{faury2020improved-logistic}
Faury, L., Abeille, M., Calauz{\`e}nes, C., and Fercoq, O. (2020).
\newblock Improved optimistic algorithms for logistic bandits.
\newblock In {\em International Conference on Machine Learning}, pages
  3052--3060. PMLR.

\bibitem[Faury et~al., 2022]{faury2022jointly}
Faury, L., Abeille, M., Jun, K.-S., and Calauz{\`e}nes, C. (2022).
\newblock Jointly efficient and optimal algorithms for logistic bandits.
\newblock In {\em International Conference on Artificial Intelligence and
  Statistics}, pages 546--580. PMLR.

\bibitem[Filippi et~al., 2010]{filippi2010parametric}
Filippi, S., Cappe, O., Garivier, A., and Szepesv{\'a}ri, C. (2010).
\newblock Parametric bandits: The generalized linear case.
\newblock {\em Advances in Neural Information Processing Systems}, 23.

\bibitem[Gentile and Orabona, 2012]{gentile2012multilabel}
Gentile, C. and Orabona, F. (2012).
\newblock On multilabel classification and ranking with partial feedback.
\newblock {\em Advances in Neural Information Processing Systems}, 25.

\bibitem[Green, 1984]{green1984iteratively}
Green, P.~J. (1984).
\newblock Iteratively reweighted least squares for maximum likelihood
  estimation, and some robust and resistant alternatives.
\newblock {\em Journal of the Royal Statistical Society: Series B
  (Methodological)}, 46(2):149--170.

\bibitem[Guo et~al., 2009]{guo2009efficient}
Guo, F., Liu, C., and Wang, Y.~M. (2009).
\newblock Efficient multiple-click models in web search.
\newblock In {\em Proceedings of the second acm international conference on web
  search and data mining}, pages 124--131.

\bibitem[Hazan et~al., 2007]{hazan2007logarithmic}
Hazan, E., Agarwal, A., and Kale, S. (2007).
\newblock Logarithmic regret algorithms for online convex optimization.
\newblock {\em Machine Learning}, 69(2):169--192.

\bibitem[Hong et~al., 2022]{hong2022deep}
Hong, J., Kveton, B., Katariya, S., Zaheer, M., and Ghavamzadeh, M. (2022).
\newblock Deep hierarchy in bandits.
\newblock {\em arXiv preprint arXiv:2202.01454}.

\bibitem[Karatzoglou et~al., 2013]{karatzoglou2013learning}
Karatzoglou, A., Baltrunas, L., and Shi, Y. (2013).
\newblock Learning to rank for recommender systems.
\newblock In {\em Proceedings of the 7th ACM Conference on Recommender
  Systems}, pages 493--494.

\bibitem[Kveton et~al., 2022]{Bkveton-22-prior}
Kveton, B., Meshi, O., Qin, Z., and Zoghi, M. (2022).
\newblock On the value of prior in online learning to rank.
\newblock In {\em The 25th International Conference on Artificial Intelligence
  and Statistics}.

\bibitem[Kveton et~al., 2015]{kveton2015cascading}
Kveton, B., Szepesvari, C., Wen, Z., and Ashkan, A. (2015).
\newblock Cascading bandits: Learning to rank in the cascade model.
\newblock In {\em International Conference on Machine Learning}, pages
  767--776. PMLR.

\bibitem[Lagr{\'e}e et~al., 2016]{lagree2016multiple}
Lagr{\'e}e, P., Vernade, C., and Cappe, O. (2016).
\newblock Multiple-play bandits in the position-based model.
\newblock {\em Advances in Neural Information Processing Systems}, 29.

\bibitem[Lattimore et~al., 2018]{lattimore2018toprank}
Lattimore, T., Kveton, B., Li, S., and Szepesvari, C. (2018).
\newblock Toprank: A practical algorithm for online stochastic ranking.
\newblock {\em Advances in Neural Information Processing Systems}, 31.

\bibitem[Li et~al., 2020]{li2020bubblerank}
Li, C., Kveton, B., Lattimore, T., Markov, I., de~Rijke, M., Szepesv{\'a}ri,
  C., and Zoghi, M. (2020).
\newblock Bubblerank: Safe online learning to re-rank via implicit click
  feedback.
\newblock In {\em Uncertainty in Artificial Intelligence}, pages 196--206.
  PMLR.

\bibitem[Liu et~al., 2009]{liu2009learning}
Liu, T.-Y. et~al. (2009).
\newblock Learning to rank for information retrieval.
\newblock {\em Foundations and Trends{\textregistered} in Information
  Retrieval}, 3(3):225--331.

\bibitem[Liu et~al., 2018]{liu2018contextual-GL-CDCM}
Liu, W., Li, S., and Zhang, S. (2018).
\newblock Contextual dependent click bandit algorithm for web recommendation.
\newblock In {\em International Computing and Combinatorics Conference}, pages
  39--50. Springer.

\bibitem[Liu et~al., 2022]{liu2022gaussianImagination}
Liu, Y., Devraj, A.~M., Van~Roy, B., and Xu, K. (2022).
\newblock Gaussian imagination in bandit learning.
\newblock {\em arXiv preprint arXiv:2201.01902}.

\bibitem[Lu and Van~Roy, 2019]{Lu-InfoBds-19}
Lu, X. and Van~Roy, B. (2019).
\newblock Information-theoretic confidence bounds for reinforcement learning.
\newblock In {\em Advances in Neural Information Processing Systems},
  volume~32. Curran Associates, Inc.

\bibitem[Meng et~al., 2020]{meng2020separate}
Meng, Y., Karimzadehgan, M., Zhuang, H., and Metzler, D. (2020).
\newblock Separate and attend in personal email search.
\newblock In {\em Proceedings of the 13th International Conference on Web
  Search and Data Mining}, pages 429--437.

\bibitem[Mitra and Craswell, 2017]{mitra2017neural}
Mitra, B. and Craswell, N. (2017).
\newblock Neural models for information retrieval.
\newblock {\em arXiv preprint arXiv:1705.01509}.

\bibitem[Murphy, 2007]{murphy2007conjugate}
Murphy, K.~P. (2007).
\newblock Conjugate bayesian analysis of the gaussian distribution.
\newblock Available at
  \url{https://www.cs.ubc.ca/~murphyk/Papers/bayesGauss.pdf}.

\bibitem[Neu et~al., 2022]{neu2022lifting}
Neu, G., Olkhovskaya, J., Papini, M., and Schwartz, L. (2022).
\newblock Lifting the information ratio: An information-theoretic analysis of
  thompson sampling for contextual bandits.
\newblock {\em arXiv preprint arXiv:2205.13924}.

\bibitem[Qin and Liu, 2013]{qin2013introducing}
Qin, T. and Liu, T.-Y. (2013).
\newblock Introducing letor 4.0 datasets.
\newblock {\em arXiv preprint arXiv:1306.2597}.

\bibitem[Qin et~al., 2021]{qin2021neural}
Qin, Z., Yan, L., Zhuang, H., Tay, Y., Pasumarthi, R.~K., Wang, X., Bendersky,
  M., and Najork, M. (2021).
\newblock Are neural rankers still outperformed by gradient boosted decision
  trees?
\newblock In {\em International Conference on Learning Representations}.

\bibitem[Richardson et~al., 2007]{richardson2007predicting}
Richardson, M., Dominowska, E., and Ragno, R. (2007).
\newblock Predicting clicks: estimating the click-through rate for new ads.
\newblock In {\em Proceedings of the 16th international conference on World
  Wide Web}, pages 521--530.

\bibitem[Santara et~al., 2022]{santara22a}
Santara, A., Aggarwal, G., Li, S., and Gentile, C. (2022).
\newblock Learning to plan variable length sequences of actions with a
  cascading bandit click model of user feedback.
\newblock In Camps-Valls, G., Ruiz, F. J.~R., and Valera, I., editors, {\em
  Proceedings of The 25th International Conference on Artificial Intelligence
  and Statistics}, volume 151 of {\em Proceedings of Machine Learning
  Research}, pages 767--797. PMLR.

\bibitem[Shen et~al., 2018]{shen2018multi}
Shen, J., Karimzadehgan, M., Bendersky, M., Qin, Z., and Metzler, D. (2018).
\newblock Multi-task learning for email search ranking with auxiliary query
  clustering.
\newblock In {\em Proceedings of the 27th ACM international conference on
  information and knowledge management}, pages 2127--2135.

\bibitem[Steele, 2004]{steele2004cauchy}
Steele, J.~M. (2004).
\newblock {\em The Cauchy-Schwarz master class: an introduction to the art of
  mathematical inequalities}.
\newblock Cambridge University Press.

\bibitem[Tagami et~al., 2013]{tagami2013ctr}
Tagami, Y., Ono, S., Yamamoto, K., Tsukamoto, K., and Tajima, A. (2013).
\newblock Ctr prediction for contextual advertising: Learning-to-rank approach.
\newblock In {\em Proceedings of the Seventh International Workshop on Data
  Mining for Online Advertising}, pages 1--8.

\bibitem[Vial et~al., 2022]{vial2022minimax}
Vial, D., Sanghavi, S., Shakkottai, S., and Srikant, R. (2022).
\newblock Minimax regret for cascading bandits.
\newblock {\em arXiv preprint arXiv:2203.12577}.

\bibitem[Zamani et~al., 2017]{Situ-Zamani17}
Zamani, H., Bendersky, M., Zhang, M., and Wang, X., editors (2017).
\newblock {\em Situational Context for Ranking in Personal Search}.

\bibitem[Zhang, 2022]{zhang2022feel}
Zhang, T. (2022).
\newblock Feel-good thompson sampling for contextual bandits and reinforcement
  learning.
\newblock {\em SIAM Journal on Mathematics of Data Science}, 4(2):834--857.

\bibitem[Zhong et~al., 2021]{Zhong-TS_cascade-2021}
Zhong, Z., Chueng, W.~C., and Tan, V. Y.~F. (2021).
\newblock Thompson sampling algorithms for cascading bandits.
\newblock {\em Journal of Machine Learning Research}, 22(218):1--66.

\bibitem[Zoghi et~al., 2017]{zoghi2017online}
Zoghi, M., Tunys, T., Ghavamzadeh, M., Kveton, B., Szepesvari, C., and Wen, Z.
  (2017).
\newblock Online learning to rank in stochastic click models.
\newblock In {\em International Conference on Machine Learning}, pages
  4199--4208. PMLR.

\bibitem[Zong et~al., 2016]{zong2016cascading}
Zong, S., Ni, H., Sung, K., Ke, N.~R., Wen, Z., and Kveton, B. (2016).
\newblock Cascading bandits for large-scale recommendation problems.
\newblock {\em arXiv preprint arXiv:1603.05359}.

\end{thebibliography}

\clearpage
\appendix
\onecolumn

\section{\uppercase{Gaussian TS Proofs}}\label{app:GTS-proofs}

\GaussTS*

\begin{proof}[Proof of \cref{thm:GaussTS}]
    Let 
    \begin{align}
        \Upr_{k,t} = \hat{\attMu}_{k,t} + \sqrt{\frac{\log_+(\frac{\timH}{\nitems \nobs_{k,t} })}{\nobs_{k,t} }}\;,
    \end{align}
    where $\log_+(x)=\log(x)\indc(x\geq 1)$. Now, notice that
    \begin{align}
    \begin{split}
        \E_t[\Upr_{\act^*(k),t}] &=
        \E_t[\Upr_{\amaxk(\attMu),t}] 
        \\&=
        \E_t[\Upr_{\amaxk(\hat{\attMu}),t}]
        \\&=
        \E_t[\Upr_{\act_t(k),t}]\;,
    \end{split}
        \label{eq:id-TS}
    \end{align}
    where $\amaxk(\cdot)$ is the $k$'th largest element operator. In the first and last equations, we used the definition of $\Upr$ and the second equation is by the construction of the algorithm, as $\attMu$ and $\tilde{\attMu}_t$ are identically distributed conditionally on $\hist_t$. 
    
    Now, by \citet[Lemma 7]{Bkveton-22-prior} we know that
\begin{align}
    \E_t[\reg_t]&=\E_t\bigg[\prod_{k\in\act^*}(1-\attMu_{k})-\prod_{k\in\act_t}(1-\attMu_{k})\bigg]\nonumber
    \\
    &=\E_t\bigg[\sum_{k=1}^{\listlen}\bigg(\prod_{j=1}^{k-1}(1-\attMu_{\act^*(j)})\bigg)(\attMu_{\act^*(k)}-\attMu_{\act_t(k)})\bigg(\prod_{j=k+1}^{{\listlen}}(1-\attMu_{\act_t(j)})\bigg)\bigg]\nonumber
    \\
    &\le 
    \E_t\bigg[\sum_{k=1}^{\listlen} \attMu_{\act^*(k)}-\attMu_{\act_t(k)}\bigg]
    \label{eq:R2theta}\;.
\end{align}
    Thus, for CM we have
    \begin{align*}
        \breg(\timH) &\eqdef \E[\sum_{t=1}^\timH \mrew_{\act^*}-\mrew_{\act_t}]
        \\
        &\le \E\bigg[\sum_{t=1}^\timH \sum_{k=1}^{\listlen} \attMu_{\act^*(k)}-\attMu_{\act_t(k)}\bigg]
        \tag{by \cref{eq:R2theta}}
        \\
        &=\E\bigg[\sum_{t=1}^\timH \E_t\Big[\sum_{k=1}^{\listlen} \attMu_{\act^*(k)}-\Upr_{\act^*(k),t}+\Upr_{\act_t(k),t}-\attMu_{\act_t(k)}\Big]\bigg]\tag{by \cref{eq:id-TS} and the tower rule of expectation}
        \\
        &=\E\bigg[\sum_{t=1}^\timH \sum_{k=1}^{\listlen} \E_t\Big[ \attMu_{\act^*(k)}-\Upr_{\act^*(k),t}\Big]+\sum_{k=1}^{\listlen}\E_t\Big[\Upr_{\act_t(k),t}-\attMu_{\act_t(k)}\Big]\bigg]
    \end{align*}
    Let's assume $\nitems \le \timH$, and let $\delta_0=2\sqrt{\frac{\nitems}{\timH}}$. We know by the integration by parts formula
    \begin{align}
        \E_t\Big[ \attMu_{\act^*(k)}-\Upr_{\act^*(k),t}\Big] \le \delta_0 + \int_{\delta_0}^1 \P(\attMu_i-\Upr_{i,t}\geq u)\dr u\nonumber
    \end{align}
    We can use the following inequality from \citet{JMLR:v11:audibert10a} which holds for any $i\in[\nitems]$
    \begin{align}
        \P(\attMu_i-\Upr_{i,t}\geq u)\le \frac{16 \nitems}{\timH u^2}\log(\sqrt{\frac{\timH}{\nitems}}u)+\frac{1}{\timH u^2/\nitems - 1}\nonumber
    \end{align}
    then using the same reasoning as in \citet[Theorem 1]{bubeck2013prior}, we can show by integration that
    \begin{align*}
        \int_{\delta_0}^1 \frac{16 \nitems}{\timH u^2}\log(\sqrt{\frac{\timH}{\nitems}}u) \dr u
        &= \left[- \frac{16 \nitems}{\timH u} \log \left(e \sqrt{\frac{\timH}{\nitems}} u\right) \right]_{\delta_0}^1
        \\&\le \frac{16 \nitems}{\timH \delta_0} \log \left(e \sqrt{\frac{\timH}{\nitems}} \delta_0\right) 
        \\&= 8(1+ \log 2) \sqrt{\frac{\nitems}{\timH}} 
    \end{align*}
    and
    \begin{align*}
       \int_{\delta_0}^1 \frac{1}{\timH u^2 / \nitems - 1}\dr u &= \left[- \frac{1}{2} \sqrt{\frac{\nitems}{\timH}} \log \left( \frac{\sqrt{\frac{\timH}{\nitems}} u + 1}{\sqrt{\frac{\timH}{\nitems}} u - 1}\right) \right]_{\delta_0}^1
       \\&\le \frac{1}{2} \sqrt{\frac{\nitems}{\timH}} \log \left( \frac{\sqrt{\frac{\timH}{\nitems}} \delta_0 + 1}{\sqrt{\frac{\timH}{\nitems}} \delta_0 - 1}\right)
       \\&= \frac{\log 3}{2} \sqrt{\frac{\nitems}{\timH}} 
    \end{align*}
    Thus, putting these together we get
    \begin{align}
        \E_t\Big[ \attMu_{\act^*(k)}-\Upr_{\act^*(k),t}\Big]\le 13\sqrt{\frac{\nitems}{\timH}},\qquad \forall k\in[{\listlen}]\;.\nonumber
    \end{align}
    Then taking the Bayes regret expectation over $\attMu$, and \cref{lem:CS}, we get
    \begin{align}
        \E\bigg[\sum_{t=1}^\timH \sum_{k=1}^{\listlen} \E_t\Big[ \attMu_{\act^*(k)}-\Upr_{\act^*(k),t}\Big]\bigg]\le 13 \listlen \sqrt{ \nitems \timH}\;.\label{eq:actStar-dev}
    \end{align}
    
    Similarly, by integration by parts, we again know for any $k\in[{\listlen}]$
    \begin{align}
        \sum_{t=1}^\timH \E_t\Big[\Upr_{\act_t(k),t}- \attMu_{\act_t(k)}\Big] 
        \le 
        \delta_0 \timH + \int_{\delta_0}^\infty \sum_{t=1}^\timH   \P(\Upr_{\act_t(k),t}-\attMu_{\act_t(k)}\geq u)\dr u \nonumber\;.
    \end{align}
    By definition of $\Upr_{\act_t(k),t}$ and a union bound style argument we can write for any $k\in[{\listlen}]$
    \begin{align}
        \sum_{t=1}^\timH \P(\Upr_{\act_t(k), t}-\attMu_{\act_t(k)}\geq u)\le \sum_{t=1}^\timH \sum_{i=1}^\nitems \P \bigg( \hat{\attMu}_{i,t} + \sqrt{\frac{\log_+(\frac{\timH}{\nitems \nobs_{i,t} })}{\nobs_{i,t} }} -\attMu_i \geq u \bigg)
        \nonumber\;.
    \end{align}
    Now we fix $i$, let $s(u)=\big\lceil \frac{3\log(\frac{\timH u^2}{\nitems})}{u^2}\big\rceil$ for $u\geq \delta_0$, and $c=1-\tfrac{1}{\sqrt{3}}$, then we can write
    \begin{align}
        \sum_{t=1}^\timH \P\bigg(\hat{\attMu}_{i,t} + \sqrt{\frac{\log_+(\frac{\timH}{\nitems \nobs_{i,t} })}{\nobs_{i,t} }} -\attMu_i \geq u \bigg) 
           &
        \le s(u) + \sum_{t = s(u)}^\timH \P(\hat{\attMu}_{i,t} - \attMu_i \geq c u) \nonumber%
        \\ &
        \le s(u) + \sum_{t = s(u) }^\timH \exp(-2 t c^2 u^2)\indc(u\le 1/c) \tag{Hoeffding's inequality}
        \\ &
        \le s(u) + \frac{\exp(-12 c^2 \log 2)}{1-\exp(-2 c^2 u^2)}\indc(u\le 1/c) \nonumber\;.
    \end{align}
    Now note that 
    \begin{align}
        \int_{\delta_0}^{+\infty} \frac{3 \log \left(\frac{\timH u^2}{\nitems}\right)}{u^2} \dr u &\le 3 (1+\log(2)) \sqrt{\frac{\timH}{\nitems}} \nonumber
        \\&\le 5.1 \sqrt{\frac{\timH}{\nitems}}\nonumber
    \end{align}
    and by the fact that $1-\exp(-u)\geq u - u^2/2$ for $u\geq 0$ we can write
    \begin{align*}
        \int_{\delta_0}^{1/c} \frac{1}{1 - \exp(- 2 c^2 u^2)} \dr u & =  \int_{\delta_0}^{1/ (2 c)} \frac{1}{1 - \exp(- 2 c^2 u^2)} \dr u + \int_{1/(2 c)}^{1/c} \frac{1}{1 - \exp(- 2 c^2 u^2)} \dr u
        \\
        & \le   \int_{\delta_0}^{1/ (2 c)} \frac{1}{2 c^2 u^2 - 2 c^4 u^4} \dr u + \frac{1}{2 c(1 - \exp(- 1/2))}  
        \\
        & \le \int_{\delta_0}^{1/ (2 c)} \frac{2}{3 c^2 u^2} \dr u + \frac{1}{2 c(1 - \exp(- 1/2))} 
        \\
        & = \frac{2}{3 c^2 \delta_0} - \frac{4}{3 c} + \frac{1}{2 c(1 - \exp(- 1/2))} 
        \\
        & \le 1.9 \sqrt{\frac{\timH}{\nitems}}.
    \end{align*}
    which altogether means for any $k\in[{\listlen}]$ we get
    \begin{align}
        \sum_{t=1}^\timH \E_t\Big[\Upr_{\act_t(k),t}- \attMu_{\act_t(k)} \Big]\le 9  \sqrt{\nitems \timH}\nonumber\;.
    \end{align}
    Now again by \cref{lem:CS} we get
    \begin{align}
        \E\bigg[\sum_{k=1}^{\listlen} \sum_{t=1}^\timH \E_t\Big[\Upr_{\act_t(k),t}- \attMu_{\act_t(k)} \Big]\bigg]\le 9 \listlen  \sqrt{ \nitems \timH}\;.\label{eq:Atk-dev}
    \end{align}
    Finally, \cref{eq:actStar-dev} and \cref{eq:Atk-dev} together show $\breg(\timH) \le \bndOne$ for \cref{alg:Gauss-LTR}.
\end{proof}

\subsection{Gaussian vs. Beta \ac{TS}}\label{app:Gaus-vs-Bern}

In this section we compare the Gaussian posterior to the Bernoulli posterior in a \ac{TS} algorithm. The Gaussian version updates its posterior as given in \cref{eq:posterior-GTS}, while Beta does it as follows:

\begin{align}
    & \alpha_{i,t} = \alpha_i+\sum_{\ell=1}^t \att_{i,\ell}\obsind_{i,t}, \ \ \ 
    \beta_{i,t} = \beta_i+\nobs_{i,t}-\sum_{\ell=1}^t \att_{i,\ell}\obsind_{i,t}
    \;,\nonumber
\end{align}
and the mean and variance of Beta posterior are (lose notation here)
\begin{align}
    \hat{\attMu}_{i,t} &= \frac{\alpha_{i,t}}{\alpha_{i,t}+\beta_{i,t}}
    \\
    \hat{\sigma}_{i,t}^2 &= \frac{\alpha_{i,t} \beta_{i,t} }{(\alpha_{i,t}+ \beta_{i,t})^2(\alpha_{i,t}+\beta_{i,t}+1)}
    \;.
\end{align}
It is not hard to see both posterior parameters of item $i$ converge to mean $O(\frac{ \sum_{\ell=1}^t \att_{i,\ell}\obsind_{i,t} }{ \nobs_{i,t} })$ and variance $O(1/\nobs_{i,t})$ for large $t$, almost with the same rate. However, the difference is in their certainty and concentration around the true mean. \cref{fig:BvsGPost} illustrates the posterior distribution for different true means ($\attMu\in\{0.1, 0.5, 0.9\}$) after $\obsind_{i,t}=100$ (so the variance is $1/100$). As we can see for the extreme cases of attractiveness ($\attMu$ close to 0 or 1), the Beta distribution does not concentrate and its uncertainty is very high (indeed, the entropy = $\infty$ for Beta posterior). The Gaussian posterior is however suitable for different values of $\attMu$. The extreme values of $\attMu$ are more important in ranking problems as the best items are highly attractive (large $\attMu$) and choosing a very unattractive item (small $\attMu$) could result in linear regret. Also, in practice usually $\attMu$ is small, so the posterior must be very accurate around zero. We leave the theoretical establishment of this observation and its benefits in overcoming the prior misspecification to the future work. 

\begin{figure*}[tb]
\centering
    \begin{minipage}[t]{.32\textwidth}
    \includegraphics[width=\textwidth]{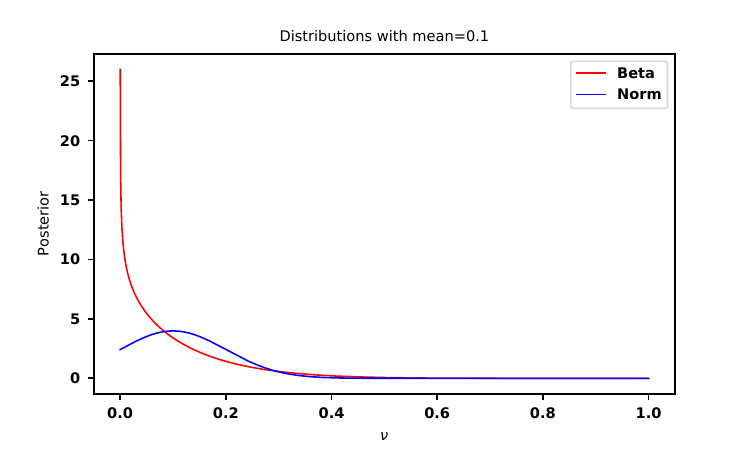}
    \end{minipage}
\hfill
    \begin{minipage}[t]{.32\textwidth}
    \includegraphics[width=\textwidth]{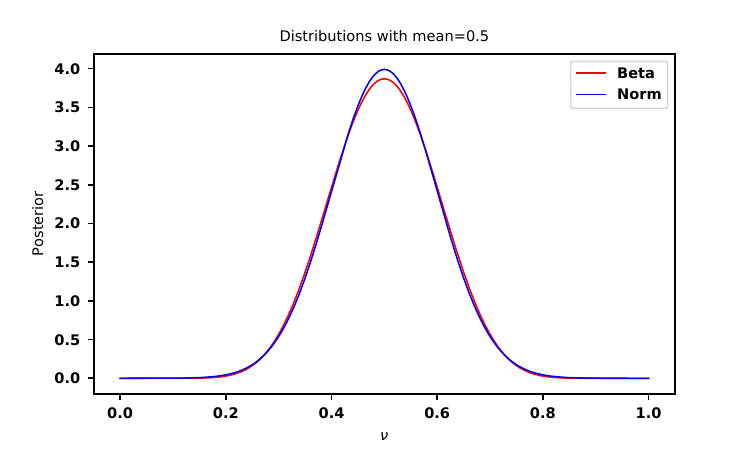}
    \end{minipage}
\hfill
    \begin{minipage}[t]{.32\textwidth}
    \includegraphics[width=\textwidth]{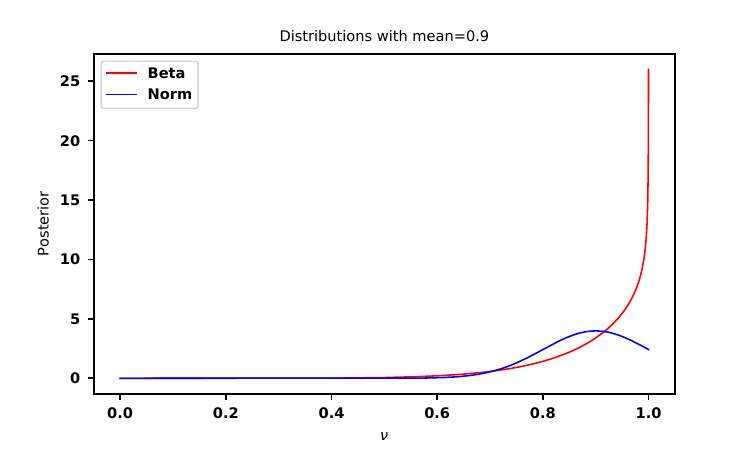}
    \end{minipage}
\caption{Concentration of Beta vs. Gaussian posterior after $100$ samples.}
\label{fig:BvsGPost}
\end{figure*}

\section{\uppercase{Linear \ac{TS} Proofs}}\label{app:LinTS-proofs}

\LinLTR*
\begin{proof}[Proof of \cref{thm:LinTS}]%
    First we write out the conditional step regret as follows
    \begin{align}
        \E_t[\reg_t]&= 
        \E_t\bigg[\sum_{k=1}^{\listlen} \cntx_{\act^*(k),t}\T\cparam - \cntx_{\act_t(k),t}\T\cparam\bigg]\label{eq:R2theta-cntx}
    \end{align}
    Notice that by the construction of the algorithm we know 
    \begin{align}
    \begin{split}
        \E_t[\cntx_{\act^*(k),t}\T\cparam] =
        \E_t [\cntx_{\act_t(k),t}\T\tilde{\cparam}_t]\;,
    \end{split}
        \label{eq:iden-dist-TS-cntx}
    \end{align}
    for all $k,t$. This is because given $\hist_t$, $\tilde{\cparam}_t$ is identically distributed as $\cparam$ and $\act_t$ is a deterministic function of $\tilde{\cparam}_t$ (See \citet[Section 4]{abeille2017linear} and \citet[Proof of Theorem 1, Step 1]{bubeck2013prior}).
    
    Now, for the cascade model we have
    \begin{align}
        \breg(\timH) &\eqdef \E[\sum_{t=1}^\timH \mrew_{\act^*_{t}}-\mrew_{\act_t}]\nonumber
        \\
        &\le \E\bigg[\sum_{t=1}^\timH \sum_{k=1}^{\listlen} \cntx_{\act^*(k),t}\T\cparam - \cntx_{\act_t(k),t}\T\cparam
        \bigg]
        \tag{by \cref{eq:R2theta-cntx}}
        \\
        &=\E\bigg[\sum_{t=1}^\timH \E_t\Big[\sum_{k=1}^{\listlen} \cntx_{\act_t(k),t}\T\tilde{\cparam}_t - \cntx_{\act_t(k),t}\T\cparam
        \Big]\bigg]\tag{by \cref{eq:iden-dist-TS-cntx} and tower rule}
        \\
        &=\E\bigg[\sum_{t=1}^\timH \sum_{k=1}^{\listlen} 
        \E_t\Big[ \cntx_{\act_t(k),t}\T(\tilde{\cparam}_t - \hat{\cparam}_t)
        \Big]+
        \E_t\Big[
        \cntx_{\act_t(k),t}\T(\hat{\cparam}_t - \cparam)
        \Big]\bigg]\label{eq:decompLinTS}
    \end{align}

    Let's define
    \begin{align}
        \EvHat{t} \eqdef& \{\forall \cntx\in\Cntx_t, |\cntx\T(\hat{\cparam}_t-\cparam)|\le \|\cntx\|_{\infMat_t^{-1}}\radi_t(\delta_t)\}\;,\nonumber
        \\
        \EvTilde{t} \eqdef& \{\forall \cntx\in\Cntx_t, |\cntx\T(\hat{\cparam}_t-\tilde{\cparam})|\le \|\cntx\|_{\infMat_t^{-1}}\radiTwo_t(\delta_t)\}\;.\nonumber
    \end{align}
    where $\radiTwo_t(\delta)=\radi_t(\delta)\sqrt{c d \log(c' d/\delta)}$ with $c,c'$ constants such that $\P\left(\|\att_{i,t}-\cntx_{i,t}\T\cparam\|\le \sqrt{c d\log(c' d/\delta)}\right)\geq 1-\delta$. Note that based on \cref{asm:noise} and Hoeffding's inequality, we can set $c=c'=2$ (see \citet[Appendix A, Example 2]{abeille2017linear}).
    
    \citet[Theorem 2]{abbasi2011improved} shows that $\P(\EvHat{t})=1-\delta_t$, so a union bound along with \cref{lem:DT} shows $\P(\cap_{t=1}^\timH \EvHat{t})\geq 1-\delta(\frac{\log\timH}{2}+1)$. 
    
    \cite[Lemma 1, the proof in the Appendix D]{abeille2017linear} states that $\P(\EvTilde{t})=1-\delta_t$. Again, a union bound along with \cref{lem:DT} shows $\P(\cap_{t=1}^\timH \EvTilde{t})\geq 1-\delta(\frac{\log\timH}{2}+1)$.
    
    Thus, using the high probability bounds above, under $\cap_{t=1}^\timH \EvHat{t}$ and $\cap_{t=1}^\timH \EvTilde{t}$ we can bound \cref{eq:decompLinTS} with
    \begin{align}
        &
        \E\bigg[\sum_{t=1}^\timH \sum_{k=1}^{\listlen}
        \indc(\EvTilde{t})
        \E_t\Big[ \cntx_{\act_t(k),t}\T(\tilde{\cparam}_t - \hat{\cparam}_t)
        \Big]+
        \indc(\EvHat{t})
        \E_t\Big[
        \cntx_{\act_t(k),t}\T(\hat{\cparam}_t - \cparam)
        \Big]\bigg]
        \nonumber\\
        &\le \E\bigg[\sum_{t=1}^\timH \sum_{k=1}^{\listlen} 
        \radiTwo_t(\delta_t)\|\cntx_{\act_t(k),t}\|_{\infMat_t^{-1}}+
        \radi_t(\delta_t)\|\cntx_{\act_t(k),t}\|_{\infMat_t^{-1}}\bigg]
        \nonumber\\
        &\le \E\bigg[\listlen\sum_{t=1}^\timH 
        2\radiTwo_t(\delta_t) d\log (1+t/\lambda) +
        2\radi_t(\delta_t) d\log (1+t/\lambda)\bigg]  \tag{by \cref{prop:self_normalized_determinant}}
        \nonumber\\
        &\le 4 d \listlen ( \radiTwo_\timH(\delta_1) + \radi_\timH(\delta_1) ) \sum_{t=1}^\timH 
        \log (1+t/\lambda) \label{eq:monoton}
        \\
        &\le \bndUnderEv \tag{By \cref{lem:CS}}\;, \nonumber
    \end{align}
    where \cref{eq:monoton} is by $\radi_t(\delta)$ and $\radiTwo_t(\delta)$ being decreasing in $\delta$ and increasing in $t$, and $\delta_1>\delta_t, \forall t> 1$.
    Bounding the regret under the complement of the event that the bounds do not hold with probability $\delta(\log T+2)$, we get 
    \begin{align*}
         \breg(\timH) &\le \big(1-\delta(\log \timH +2)\big) \bndUnderEv + \delta(\log\timH+2)\timH\;.
    \end{align*}
    Now, take $\delta=\LinTSdelta$, which simplifies the previous inequality to
    \begin{align*}
        \breg(\timH)
        &\le (1-1/\timH)\bndUnderEv[\tfrac{1}{2\timH(\log \timH + 2)}] + 1 
        \\
        &=O\bigg(d \listlen \sqrt{ \log(d\timH(\log \timH))}\sqrt{\log\big( (1+\timH/\lambda)^{d/2}\timH(\log \timH)\big) }\sqrt{d \timH \log(1+\timH/\lambda)} \bigg)
        \\
        &=
        \bndLinTSO\;.
    \end{align*}
\end{proof}

\begin{lemma}[Doubling Trick]\label{lem:DT}
    \begin{align}
        \sum_{t=1}^\timH 1/2^{\max(1,\lceil\log t\rceil)}\le \frac{\log \timH}{2}+1
        \nonumber
    \end{align}
\end{lemma}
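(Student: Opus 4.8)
The plan is to evaluate the sum by reading it in base $2$ and grouping the rounds into geometric blocks on which the summand is constant. Since the doubling trick targets a factor of $2$, I read $\log$ here as $\log_2$; under any other base the summand would decay only polynomially in $t$ and the stated bound would be false, so this reading is forced.

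First I would record the value of the exponent $\max(1,\lceil\log t\rceil)$ round by round. For $t=1$ one has $\lceil\log 1\rceil = 0$, so the $\max$ clips it up to $1$; for every $t\ge 2$ one has $\lceil \log t\rceil \ge 1$, so the $\max$ is inactive and the exponent is simply $\lceil\log t\rceil$. Consequently the summand equals $1/2$ for $t\in\{1,2\}$ and equals $2^{-\lceil\log t\rceil}$ for $t\ge 2$. Next I would group the rounds by the common value $j=\lceil\log t\rceil$ of the exponent: for each integer $j\ge 2$ the rounds with $\lceil \log t\rceil = j$ are exactly $t\in\{2^{j-1}+1,\dots,2^j\}$, of which there are $2^{j-1}$, and each contributes $2^{-j}$; hence this block contributes exactly $2^{j-1}\cdot 2^{-j}=\tfrac12$. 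The first block, $j=1$, collects $t\in\{1,2\}$ and contributes $2\cdot\tfrac12 = 1$.

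Writing $J=\lceil\log\timH\rceil$ for the index of the last (possibly truncated) block and summing the block contributions then gives
$$\sum_{t=1}^{\timH} \frac{1}{2^{\max(1,\lceil\log t\rceil)}} \le 1 + (J-1)\cdot\tfrac12,$$
where there are $J-1$ blocks of weight at most $\tfrac12$ beyond the initial block of weight $1$, and the inequality is strict exactly when $\timH$ is not a power of $2$, since then the final block is only partially present. Finally I would use $J=\lceil\log\timH\rceil \le \log\timH + 1$ to conclude
$$1 + \frac{J-1}{2} \le 1 + \frac{\log\timH}{2} = \frac{\log\timH}{2}+1,$$
which is the claim.

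The only points requiring care are the boundary round $t=1$, where the $\max$ with $1$ must be honored, and the possibly incomplete final block, which only helps since it is bounded above by a full block of weight $\tfrac12$; the degenerate case $\timH=1$ (where the sum is $\tfrac12\le 1$) is checked directly. None of these is a genuine obstacle, so I expect the whole argument to be elementary, with the geometric block-counting being its only substantive step.
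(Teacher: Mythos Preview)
Your proposal is correct and follows essentially the same approach as the paper: both group the rounds into dyadic blocks on which the summand is constant, observe that the first block $\{1,2\}$ contributes $1$ and every subsequent full block contributes exactly $\tfrac12$, and then count at most $\lceil\log\timH\rceil$ blocks. Your handling of the final (possibly truncated) block---simply bounding its contribution by $\tfrac12$---is a bit more streamlined than the paper's, which carries the exact partial-block count $(\timH-2^{\lfloor\log\timH\rfloor})/2^{\lceil\log\timH\rceil}$ through a couple of extra inequalities, but the underlying argument is identical.
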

\begin{proof}
 \begin{align*}
     \sum_{t=1}^\timH 1/2^{\max(1,\lceil\log t\rceil)}
     &= 2/2 + 2/4 + 4/8 + \cdots + 2^{\lfloor\log \timH\rfloor-1}/2^{\lfloor\log \timH\rfloor} + (\timH - 2^{\lfloor\log \timH\rfloor})/2^{\lceil \log \timH\rceil}
     \\
     &= 1 + (\lfloor \log \timH\rfloor-1)/2 + (\timH - 2^{\lfloor\log \timH\rfloor})/2^{\lceil \log \timH\rceil}
     \\
     &\leq 1+ \lfloor \log \timH\rfloor/2 -1/2 + \timH/2^{\lceil \log \timH\rceil}-1/2
     \\
     &\le \lfloor \log \timH\rfloor/2 + \timH/2^{\log \timH}
     \\
     &\le \frac{\log \timH}{2}+1
 \end{align*}
\end{proof}

\begin{proposition}\label{prop:self_normalized_determinant}
Let $\lambda \geq 1$, for any arbitrary sequence $(\cntx_1, \cntx_2, \ldots, \cntx_{t})\in(\Cntx_1\otimes\cdots\otimes\Cntx_t)$
\begin{equation}\label{eq:natural_explored_direction_ls}
\sum_{s=1}^{t} \|\cntx_s\|_{\infMat_s^{-1}}^2 \le 2 \log \frac{\det(\infMat_{t+1})}{\det(\lambda I)} \le 2 d \log \Big( 1 + \frac{t}{\lambda} \Big).
\end{equation}
\end{proposition}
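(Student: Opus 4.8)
The plan is to prove the two inequalities in \cref{eq:natural_explored_direction_ls} one at a time. The left inequality is the classical ``elliptical potential'' bound and the right one is an arithmetic--geometric--mean estimate of a determinant; both are standard (see \citet{abbasi2011improved}), so the write-up is mostly bookkeeping. Throughout I use the convention $\infMat_1=\lambda\iden_d$ and $\infMat_{s+1}=\infMat_s+\cntx_s\cntx_s\T$, so that $\infMat_{t+1}=\lambda\iden_d+\sum_{s=1}^{t}\cntx_s\cntx_s\T$.

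For the left inequality I would argue as follows. Each update adds a rank-one matrix, so by the matrix determinant lemma applied to $\infMat_{s+1}=\infMat_s+\cntx_s\cntx_s\T$,
\[
\det(\infMat_{s+1})=\det(\infMat_s)\bigl(1+\|\cntx_s\|_{\infMat_s^{-1}}^2\bigr),
\]
equivalently $\|\cntx_s\|_{\infMat_s^{-1}}^2=\det(\infMat_{s+1})/\det(\infMat_s)-1$. Since $\lambda\ge 1$ implies $\infMat_s\succeq\lambda\iden_d\succeq\iden_d$, we have $\infMat_s^{-1}\preceq\iden_d$ and therefore $\|\cntx_s\|_{\infMat_s^{-1}}^2\le\|\cntx_s\|^2\le 1$ by \cref{asm:cntx}. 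On $[0,1]$ the elementary inequality $u\le 2\log(1+u)$ holds, hence $\|\cntx_s\|_{\infMat_s^{-1}}^2\le 2\log\bigl(1+\|\cntx_s\|_{\infMat_s^{-1}}^2\bigr)=2\log\bigl(\det(\infMat_{s+1})/\det(\infMat_s)\bigr)$. Summing $s=1,\dots,t$, the right-hand side telescopes to $2\log\bigl(\det(\infMat_{t+1})/\det(\infMat_1)\bigr)=2\log\bigl(\det(\infMat_{t+1})/\det(\lambda\iden_d)\bigr)$, which is the first claimed bound.

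For the right inequality I would bound the determinant by the trace. Let $\sigma_1,\dots,\sigma_d>0$ be the eigenvalues of $\infMat_{t+1}$; AM--GM gives $\det(\infMat_{t+1})=\prod_{i=1}^{d}\sigma_i\le\bigl(\tfrac1d\sum_{i=1}^{d}\sigma_i\bigr)^d=\bigl(\tr{(\infMat_{t+1})}/d\bigr)^d$. Since $\infMat_{t+1}=\lambda\iden_d+\sum_{s=1}^{t}\cntx_s\cntx_s\T$ and $\|\cntx_s\|\le 1$, we get $\tr{(\infMat_{t+1})}\le\lambda d+t$, hence $\det(\infMat_{t+1})\le(\lambda+t/d)^d$. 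Dividing by $\det(\lambda\iden_d)=\lambda^d$ and using $t/d\le t$ gives $2\log\bigl(\det(\infMat_{t+1})/\det(\lambda\iden_d)\bigr)\le 2d\log(1+t/(\lambda d))\le 2d\log(1+t/\lambda)$, completing the chain.

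I do not expect a genuine obstacle; this is the textbook elliptical potential lemma. The only points that need care are: (i) checking $\|\cntx_s\|_{\infMat_s^{-1}}^2\le 1$, which is exactly where $\lambda\ge 1$ and $\|\cntx\|\le 1$ enter and is what makes $u\le 2\log(1+u)$ applicable; and (ii) making sure the telescoping base case is $\infMat_1=\lambda\iden_d$. When the proposition is invoked in the proof of \cref{thm:LinTS} for the actual design matrix of \cref{alg:LinGauss-LTR} (whose update adds $\sum_{i\in\act_t}\obsind_{i,t}\cntx_{i,t}\cntx_{i,t}\T$ at round $t$), one simply relabels the observed rank-one increments as a single sequence $\cntx_1,\cntx_2,\dots$ and applies the statement verbatim.
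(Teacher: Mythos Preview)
Your proof is correct and is exactly the standard elliptical potential argument. The paper does not actually supply its own proof of this proposition; it is stated as a known result (it is essentially Lemma~11 of \citet{abbasi2011improved}), so there is nothing to compare against beyond noting that your write-up matches the textbook derivation.
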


\section{\uppercase{Logistic \ac{TS} proofs}}\label{app:log-proofs}

In this section, we use the results in \citet{filippi2010parametric,abeille2017linear} to extend the results of the linear case to the GLM case. First, we restate the following result from Appendix F of \citet{abeille2017linear}. Let $\ca{F}_t=(\ca{F}_1,\sigma(\cntx_1,\att_t,\cdots,\cntx_t,\att_t))$ be the sigma algebra generated by the prior knowledge $\ca{F}_1$ and the history up to round $t$.

\begin{proposition}[Proposition 11, \citet{abeille2017linear}]\label{prop:GLM-Bounds}
    For any $\delta\in(0,1)$ and $t\geq 1$, under \cref{asm:cntx,asm:param,asm:noise-GLM,asm:linkFunction}, for any $\ca{F}_t$-adapted sequence of contexts $(\cntx_1,\cdots,\cntx_t)$, the estimation $\hat{\cparam}_t$ from \cref{eq:Log-Param-estimate} is such that
    \begin{align}
        \|\hat{\cparam}_t-\cparam\|_{\infMat_t}\leq \frac{\radi_t(\delta)}{\minLink}\nonumber
    \end{align}
    and 
    \begin{align}
        \forall \cntx\in\R^d, \quad \|\linkf(\cntx\T\hat{\cparam})-\linkf(\cntx\T\cparam)\| \leq \frac{\lipchLink\radi_t(\delta)}{\minLink}\|\cntx\|_{\infMat^{-1}_t}\nonumber
    \end{align}
    with probability $1-\delta$.
\end{proposition}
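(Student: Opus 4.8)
The plan is to establish the claim as the specialization of Proposition 11 of \citet{abeille2017linear} to our cascade GLM setting, so the main work is to reproduce the core estimation argument while checking that \cref{asm:cntx,asm:param,asm:noise-GLM,asm:linkFunction} supply exactly the hypotheses that result requires. Throughout I would abbreviate the observed context--feedback pairs up to round $t$ as $(\cntx_s,\att_s)$ (one pair per examined slot, i.e.\ those with $\obsind=1$), write $\infMat_t=\lambda\iden_d+\sum_s\cntx_s\cntx_s\T$ for the regularized design matrix and $\bar{\infMat}_t\eqdef\sum_s\cntx_s\cntx_s\T$ for its unregularized Gram part, and set $\xi_t\eqdef\sum_s\noise_s\cntx_s$ for the accumulated noise. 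Defining the vector field $g_t(\cparam)\eqdef\sum_s\linkf(\cntx_s\T\cparam)\cntx_s$, \cref{eq:Log-Param-estimate} says the estimate satisfies $g_t(\hat{\cparam}_t)=\sum_s\att_s\cntx_s$; since \cref{asm:noise-GLM} gives $\att_s=\linkf(\cntx_s\T\cparam)+\noise_s$, subtracting yields the key identity $g_t(\hat{\cparam}_t)-g_t(\cparam)=\xi_t$.

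\emph{First inequality.} I would apply the mean value theorem to $g_t$ along the segment from $\cparam$ to $\hat{\cparam}_t$: there is a matrix $H_t=\int_0^1\sum_s\dot{\linkf}\big(\cntx_s\T(\cparam+v(\hat{\cparam}_t-\cparam))\big)\cntx_s\cntx_s\T\,\dr v$ with $g_t(\hat{\cparam}_t)-g_t(\cparam)=H_t(\hat{\cparam}_t-\cparam)$. By \cref{asm:linkFunction} and the definition $\minLink=\inf_{\cparam\in\cparamSet,\cntx\in\Cntx}\dot{\linkf}(\cntx\T\cparam)$ we have $H_t\succeq\minLink\,\bar{\infMat}_t$, provided the whole segment stays in $\cparamSet$. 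Pairing the identity with the quadratic form $(\hat{\cparam}_t-\cparam)\T H_t(\hat{\cparam}_t-\cparam)=(\hat{\cparam}_t-\cparam)\T\xi_t$ and Cauchy--Schwarz in the $\bar{\infMat}_t$ inner product gives $\minLink\|\hat{\cparam}_t-\cparam\|_{\bar{\infMat}_t}\le\|\xi_t\|_{\bar{\infMat}_t^{-1}}$. Then I invoke the self-normalized martingale tail bound of \citet{abbasi2011improved}: because $(\cntx_s)$ is $\ca{F}_t$-adapted and $\noise_s$ is $\sigma^2$-sub-Gaussian, with probability $1-\delta$ the noise term is at most $\sigma^2\sqrt{2\log\big((\lambda+t)^{d/2}\lambda^{-d/2}/\delta\big)}$, where the determinant--trace bound $\det(\infMat_t)\le(\lambda+t)^d$ (using $\|\cntx_s\|\le1$ from \cref{asm:cntx}) controls the log term. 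Absorbing the $\lambda\iden_d$ regularization---using $\|\cparam\|\le\paramNorm$ and $\|\hat{\cparam}_t\|\le\paramNorm$ from \cref{asm:param} to account for the $\sqrt{\lambda}\paramNorm$ summand---packages everything exactly into $\minLink\|\hat{\cparam}_t-\cparam\|_{\infMat_t}\le\radi_t(\delta)$, which is the first claim.

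\emph{Second inequality.} This follows deterministically from the first on the same $1-\delta$ event. For any $\cntx\in\R^d$, the Lipschitz property in \cref{asm:linkFunction} (constant $\lipchLink$) together with Cauchy--Schwarz in the $\infMat_t$ inner product gives $|\linkf(\cntx\T\hat{\cparam}_t)-\linkf(\cntx\T\cparam)|\le\lipchLink\,|\cntx\T(\hat{\cparam}_t-\cparam)|\le\lipchLink\|\cntx\|_{\infMat_t^{-1}}\|\hat{\cparam}_t-\cparam\|_{\infMat_t}\le\frac{\lipchLink\radi_t(\delta)}{\minLink}\|\cntx\|_{\infMat_t^{-1}}$, exactly the stated bound.

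\emph{Main obstacle.} The delicate points are all in the first inequality. The curvature bound $H_t\succeq\minLink\,\bar{\infMat}_t$ requires the entire segment between $\cparam$ and the (unconstrained IRLS) estimate $\hat{\cparam}_t$ to remain in the region where $\dot{\linkf}\ge\minLink$; guaranteeing this needs either convexity of $\cparamSet$ with $\hat{\cparam}_t\in\cparamSet$ or an explicit projection step, and is the part where one must lean on the structural arguments of \citet{filippi2010parametric,abeille2017linear}. The second subtlety is the regularization bookkeeping---transferring the bound from the unregularized Gram matrix $\bar{\infMat}_t$ to $\infMat_t$ and the self-normalized bound between the corresponding norms---so that the $\sqrt{\lambda}\paramNorm$ term emerges with precisely the constant appearing in $\radi_t(\delta)$; this is routine but must be handled carefully to match the cited proposition.
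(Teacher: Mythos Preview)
The paper does not prove this proposition at all: it is explicitly introduced as a restatement of a result from Appendix~F of \citet{abeille2017linear} and is used as a black box in the proof of \cref{thm:Log-LTR2}. Your sketch correctly reconstructs the argument of that reference (the \citet{filippi2010parametric} mean-value-theorem step to get $H_t\succeq\minLink\bar{\infMat}_t$, the self-normalized martingale bound of \citet{abbasi2011improved} for $\|\xi_t\|_{\infMat_t^{-1}}$, and the Lipschitz/Cauchy--Schwarz deduction of the second inequality), and you accurately flag the two genuine subtleties---keeping the segment inside the region where $\dot{\linkf}\ge\minLink$ and the $\bar{\infMat}_t\to\infMat_t$ regularization bookkeeping. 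So there is no discrepancy to report: your approach is the cited proof, and the paper offers none of its own.
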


\LogLTRTwo*
\begin{proof}[Proof of \cref{thm:Log-LTR2}]
     Starting with \cref{eq:R2theta}, we know
     \begin{align}
         \E_t[\reg_t]&\le 
        \E_t\bigg[\sum_{k=1}^{\listlen} \attMu_{\act^*_t(k)}-\attMu_{\act_t(k)}\bigg]\nonumber
        \\&
        =\E_t\bigg[\sum_{k=1}^{\listlen} \linkf(\cntx_{\act^*_t(k),t}\T\cparam) - \linkf(\cntx_{\act_t(k),t}\T\cparam)\bigg]\nonumber
        \\&
        =\E_t\bigg[\sum_{k=1}^{\listlen} \linkf(\cntx_{\act^*_t(k),t}\T\cparam) - \linkf(\cntx_{\act_t(k),t}\T\tilde{\cparam}_t)
        -\sum_{k=1}^{\listlen} \linkf(\cntx_{\act_t(k),t}\T\tilde{\cparam}_t) - \linkf(\cntx_{\act_t(k),t}\T\cparam)
        \bigg]\nonumber
        \\&
        \le \E_t\bigg[\sum_{k=1}^{\listlen} \linkf(\cntx_{\act^*_t(k),t}\T\cparam) - \linkf(\cntx_{\act_t(k),t}\T\tilde{\cparam}_t)
        -\sum_{k=1}^{\listlen} \lipchLink\|\cntx_{\act_t(k),t}\|_{\infMat_t^{-1}}\|\tilde{\cparam}_t-\cparam\|_{\infMat^{-1}_t}
        \bigg] \label{eq:GLM-decomp}
        \;.
     \end{align}
     where \cref{eq:GLM-decomp} is by definition of $\lipchLink$ and Cauchy-Schwarz inequality. The second term is bounded the same way as in Theorem 1 of \citet{abeille2017linear} using \cref{prop:GLM-Bounds} for each $k\in[\listlen]$
     \begin{align}
         \sum_{k=1}^{\listlen} \lipchLink\|\cntx_{\act_t(k),t}\|_{\infMat_t^{-1}}\|\tilde{\cparam}_t-\cparam\|_{\infMat^{-1}_t}\leq K \frac{\lipchLink}{\minLink}\big(\radi_t(\delta')+\radiTwo_t(\delta')d\big)\sqrt{2\timH d \log(1+\frac{\timH}{\lambda})} 
         \label{eq:Reg-GLM-2}
     \end{align}
     For the first term in \cref{eq:GLM-decomp} we can use the properties of \ac{TS} and have
     \begin{align*}
         \E_t\bigg[ \sum_{k=1}^{\listlen} \linkf(\cntx_{\act^*_t(k),t}\T\cparam) - \linkf(\cntx_{\act_t(k),t}\T\tilde{\cparam}_t) \bigg] &= \E_t\bigg[ \sum_{k=1}^{\listlen} \linkf(\cntx_{\act_t(k),t}\T\cparam) - \linkf(\cntx_{\act_t(k),t}\T\tilde{\cparam}_t) \bigg] \ \ \forall k\in[\listlen]
     \end{align*}
     Now, note that if $\sup_{\cntx\in\Cntx}\cntx\cparam - \sup_{\cntx\in\Cntx}\cntx\tilde{\cparam}_t\ge 0$, then by definition of $\lipchLink$ and removing the absolute value we get
     \begin{align*}
         \linkf(\cntx_{\act_t(k),t}\T\cparam) - \linkf(\cntx_{\act_t(k),t}\T\tilde{\cparam}_t)\leq \lipchLink \big(\sup_{\cntx\in\Cntx}\cntx\cparam - \sup_{\cntx\in\Cntx}\cntx\tilde{\cparam}_t \big)\ \ \forall k\in[\listlen]
     \end{align*}
     and otherwise, by definition of $\minLink$ we (always) have
     \begin{align*}
         \linkf(\cntx_{\act_t(k),t}\T\cparam) - \linkf(\cntx_{\act_t(k),t}\T\tilde{\cparam}_t)\leq \minLink \big(\sup_{\cntx\in\Cntx}\cntx\cparam - \sup_{\cntx\in\Cntx}\cntx\tilde{\cparam}_t \big) \ \ \forall k\in[\listlen]
     \end{align*}
     By the bound on $R^{\text{TS}}$ in Theorem 1 of \citet{abeille2017linear} we know
     \begin{align*}
         \sup_{\cntx\in\Cntx}\cntx\cparam - \sup_{\cntx\in\Cntx}\cntx\tilde{\cparam}_t
         \le \frac{2 \radiTwo_t(\delta')}{0.1 \minLink}\E_t\big[\|\cntx_{\act_t(k),t}\|_{\infMat_t^{-1}}\big] \ \ \forall k\in[\listlen]
     \end{align*}
     (Note that $p$ from \citet{abeille2017linear} is replaced by 0.1 accordingly)
     where $\delta'=\frac{\delta}{4\timH}$. Next we can use Proposition 2 of \citet{abeille2017linear} and get
     \begin{align*}
        \E_t\bigg[ \sum_{t=1}^\timH \sum_{k=1}^\listlen \|\cntx_{\act_t(k),t}\|_{\infMat_t^{-1}} \bigg] \le \listlen \sqrt{\frac{8\timH}{\lambda}\log\frac{4}{\delta}}\;,
     \end{align*}
     which means 
     \begin{align}
         \sum_{t=1}^\timH \E_t\bigg[ \sum_{k=1}^{\listlen} \linkf(\cntx_{\act_t(k),t}\T\cparam) - \linkf(\cntx_{\act_t(k),t}\T\tilde{\cparam}_t) \bigg] \le \listlen \frac{2 \radiTwo_t(\delta')}{0.1 \minLink} \sqrt{\frac{8\timH}{\lambda}\log\frac{4}{\delta}} \label{eq:reg-GLM-1}
     \end{align}
     Putting it all together (\cref{eq:Reg-GLM-2} and \cref{eq:reg-GLM-1}) we get
     \begin{align*}
         \breg(\timH)=\sum_{t=1}^\timH \E_t[\reg_t] \leq 
         \bndLogTSKappa
     \end{align*}
     holds with probability at least $1-\delta$. Now, if we take $\delta=\frac{1}{\timH}$ and $\gamma=1$, the regret upper bound is 
     \begin{align*}
         \bndLogTSKappaOrderwise
     \end{align*}
     
\end{proof}

\subsection{Removing the Dependence on $\minLink$ and $\lipchLink$}\label{app:log-proofs-noKappa}
In this section, we prove the result in \cref{thm:LogLTR1} for the Sigmoid function as the link function.
\LogLTROne*
\begin{proof}[Proof of \cref{thm:LogLTR1}]
    We define the \emph{partial lifted information gain}  \citep{neu2022lifting} as
    \begin{equation}
        \infoRatio_{t,k} \eqdef \frac{(\E_t[\mrew_{\act^*(k)}(\cparam,\cntx_t)-\mrew_{\act_t(k)}(\cparam,\cntx_t)])^2}{\info_t(\cparam; \att_{t,\act_t(k)})}
        \nonumber
    \end{equation}
    where $\mrew_{\act(k)}(\cparam,\cntx)$ is the probability of click (mean reward) for item $\act(k)$ under context $\cntx$. Also, 
    \[\info_t(\cparam; \att_{t,\act(k)}) \eqdef \E_t\big[\KL{\P(\att_{t,\act(k)}|\cparam,\cntx_t,\act(k),\hist_t)}{\P(\att_{t,\act(k)}|\cntx_t,\act(k),\hist_t)}\big]\]
    is the mutual information between $\cparam$ and $\att_{t,\act(k)}$ conditioned on the history, context $\cntx_t$, and action $\act_t$. Here $\KL{p}{q}$ is the KL divergence between $p$ and $q$ distributions. 
    
    Since $\linkf(\cdot)\in[0,1]$, we can write the step regret conditioned on the history using \citet[Lemma 7]{Bkveton-22-prior}
    as follows
    \begin{align}
        \E_t[\reg_t] 
        &\le \E_t\bigg[\sum_{k=1}^{\listlen} \linkf(\cntx_{\act^*(k),t}\T\cparam) - \linkf(\cntx_{\act_t(k),t}\T\cparam)\bigg]
        \nonumber
        \\
        &= \E_t\bigg[\sum_{k=1}^\listlen \sqrt{\infoRatio_{t,k}   \info_t(\cparam; \att_{t,\act(k)})}\bigg]
        \label{eq:reg-infoGain-bnd}\;,
    \end{align}
    where in the last equality we used the definition of $\infoRatio$ and $\mrew$.
    Now by tower rule, we have
    \begin{align*}
        \breg(\timH)
        &=
        \E\big[\sum_{t=1}^\timH \E_t[\reg_t]\big]
        \\
        &\le \sqrt{
        \E\bigg[\sum_{t=1}^\timH\sum_{k=1}^\listlen \infoRatio_{t,k}\bigg] 
        ~\E\bigg[\sum_{t=1}^\timH \sum_{k=1}^\listlen \info_t(\cparam;\att_{t,\act_t(k)})\bigg]
        }\;.
        \nonumber
    \end{align*}
    where the inequality is by \cref{lem:CS} and \cref{eq:reg-infoGain-bnd}. 
    Now by \cref{lem:Info-Ratio} we get
    \begin{align*}
        \E\bigg[\sum_{t=1}^\timH\sum_{k=1}^\listlen \infoRatio_{t,k}\bigg] \leq 2\listlen \nitems\;.
    \end{align*}
    Let $\paramSup$ denote the set such that $\cparam\in\paramSup$. By \citet[Lemma 6]{neu2022lifting}, for any $k$ and $\varepsilon>0$ we get
    $%
        \E\bigg[\sum_{t=1}^\timH \info_t(\cparam;\att_{t,\act_t(k)})\bigg]
        \le\log\left(
        \Ngaus_\varepsilon(\paramSup)\right)+\varepsilon \timH, 
    $ %
    where $\Ngaus_\varepsilon(\paramSup)$ is the $\varepsilon$-covering number of $\paramSup$ w.r.t $\ell_2$-norm. By \cref{asm:param} and the standard result on the covering number of the Euclidean ball in $\R^d$ we know $\ca{N}_\varepsilon(\paramSup)\le (\frac{2 \paramNorm}{\epsilon}+1)^d$. Thus, choosing $\varepsilon=1/\timH$ we get
    \begin{align*}
        \E\bigg[\sum_{t=1}^\timH \sum_{k=1}^\listlen \info_t(\cparam;\att_{t,\act_t(k)}) \bigg]
        &\le \listlen (d\log(2\paramNorm \timH +1) +1)\;.
    \end{align*} 
    Putting all the pieces together, we get the following bound
    \begin{align*}
        \breg(\timH)
        \le \listlen \sqrt{2 \nitems (d\log(2\paramNorm \timH +1) +1)} =\tilde{O}(\listlen\sqrt{d\nitems\timH})\;.
    \end{align*}
\end{proof}

\begin{lemma}[Bounding the Information Ratio]\label{lem:Info-Ratio}
    If $\att_{i,t}\in\{0,1\}$ for all $i,t$, then, $\infoRatio_{t,k}\leq 2 \sum_{i\in\items} \E_t[\posMeRew_{t}(\cntx_t,i)]$ for all $t\ge 1$ and $k\in \{1,\cdots,\listlen\}$.
\end{lemma}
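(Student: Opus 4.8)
The plan is to run the information-ratio argument of \citet{dong2019performance,neu2022lifting} positionwise. Fix the round $t$ and the position $k$, and condition throughout on $(\hist_t,\cntx_t)$, so that every $\E_t,\P_t,\mathrm{Var}_t,\mathrm{Cov}_t$ and $\info_t$ below is the corresponding conditional quantity. Write $\pi_i:=\P_t(\act_t(k)=i)$, $\hat r_{t,i}:=\E_t[\mrew_i(\cparam,\cntx_t)]=\posMeRew_t(\cntx_t,i)$, and $Z_i:=\indc(\act^*(k)=i)$. The first ingredient is the Thompson-sampling symmetry: $\act_t$ ranks items by the scores $\cntx_{i,t}\T\tilde\cparam_t$, $\act^*$ ranks them by $\cntx_{i,t}\T\cparam$, and given $(\hist_t,\cntx_t)$ we have $\tilde\cparam_t\eqdist\cparam$ with $\tilde\cparam_t$ independent of $\cparam$; hence $\act_t(k)$ and $\act^*(k)$ have the same law and are both independent of $\cparam$. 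In particular $\P_t(\act^*(k)=i)=\pi_i$, $\E_t[\mrew_{\act_t(k)}]=\sum_{i}\pi_i\hat r_{t,i}$, and the numerator of $\infoRatio_{t,k}$ is a sum of posterior covariances,
\[
\E_t\big[\mrew_{\act^*(k)}-\mrew_{\act_t(k)}\big]=\sum_{i\in\items}\big(\E_t[\mrew_i(\cparam,\cntx_t)\,Z_i]-\pi_i\hat r_{t,i}\big)=\sum_{i\in\items}\mathrm{Cov}_t\big(\mrew_i(\cparam,\cntx_t),Z_i\big).
\]

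For the numerator I would bound each covariance by $\mathrm{Cov}_t(\mrew_i,Z_i)^2\le\mathrm{Var}_t(\mrew_i)\,\mathrm{Var}_t(Z_i)\le\mathrm{Var}_t(\mrew_i)\,\pi_i$ and then apply a weighted Cauchy--Schwarz, using the elementary bound that a $[0,1]$-valued random variable with mean $\hat r_{t,i}$ has variance at most $\hat r_{t,i}(1-\hat r_{t,i})\le\hat r_{t,i}$; this yields $\big(\E_t[\mrew_{\act^*(k)}-\mrew_{\act_t(k)}]\big)^2\le\big(\sum_i\hat r_{t,i}\big)\cdot\big(\sum_i\pi_i\,\mathrm{Var}_t(\mrew_i)/\hat r_{t,i}\big)$, with every summand of the second factor in $[0,1]$. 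For the denominator, the definition of $\info_t(\cparam;\att_{t,\act_t(k)})$ (which conditions on $\act_t(k)$) together with the independence of $\act_t(k)$ from $\cparam$ gives $\info_t(\cparam;\att_{t,\act_t(k)})=\sum_i\pi_i\,\info_t(\cparam;\att_{t,i})$, and here the hypothesis $\att_{i,t}\in\{0,1\}$ is essential: $\info_t(\cparam;\att_{t,i})=\E_t[\KL{\Brn(\mrew_i(\cparam))}{\Brn(\hat r_{t,i})}]$, and a Bernoulli sharpening of Pinsker's inequality — of the type $\KL{\Brn(p)}{\Brn(q)}\gtrsim (p-q)^2/\max(p,q)$, together with a truncation handling the rare event $\{\mrew_i(\cparam)\gg\hat r_{t,i}\}$ — lower bounds $\info_t(\cparam;\att_{t,i})$ by a universal constant times $\mathrm{Var}_t(\mrew_i)/\hat r_{t,i}$. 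Dividing, the normalized-variance factors cancel and $2\sum_{i\in\items}\hat r_{t,i}=2\sum_{i\in\items}\E_t[\posMeRew_t(\cntx_t,i)]$ survives; since $\posMeRew_t(\cntx_t,i)$ is $\hist_t$-measurable, the outer $\E_t$ in the statement is then a formality.

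I expect the main obstacle to be precisely that last information lower bound: establishing that $\info_t(\cparam;\att_{t,i})$ is, up to a universal constant, at least $\mathrm{Var}_t(\mrew_i)/\hat r_{t,i}$, in a form compatible with the Cauchy--Schwarz weights chosen on the numerator side so that the constant comes out as $2$. This is also where all the leverage comes from: without exploiting the $\{0,1\}$-valued feedback one only recovers the crude $\infoRatio_{t,k}=O(\nitems)$ bound, whereas the Bernoulli structure — both the information and the variance at item $i$ being governed by the mean $\hat r_{t,i}$, not by a uniform constant — gives the data-dependent form, which, combined with $\sum_i\hat r_{t,i}\le\nitems$ and the cumulative information-gain bound of \citet[Lemma 6]{neu2022lifting}, is what feeds \cref{thm:LogLTR1}. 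A secondary technicality to verify is the cascade observation structure: position $k$ is observed only when $k\le\cmclick_t$, so $\info_t(\cparam;\att_{t,\act_t(k)})$ should be read as the information conditional on that examination event; since the examination indicators are measurable functions of $\cparam$ and the realized attractions — independent of the posterior sample $\tilde\cparam_t$ — the symmetry and independence arguments above survive conditioning on them.
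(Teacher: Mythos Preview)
Your route is genuinely different from the paper's, and the step you flag as the main obstacle is in fact a real gap, not just a technicality.

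You try a Russo--Van~Roy covariance decomposition: write the numerator as $\big(\sum_i \mathrm{Cov}_t(\mrew_i,Z_i)\big)^2$, use a weighted Cauchy--Schwarz with weights $\hat r_{t,i}$ to get $(\sum_i \hat r_{t,i})\cdot(\sum_i \pi_i\,\mathrm{Var}_t(\mrew_i)/\hat r_{t,i})$, and then hope to cancel the second factor against a lower bound $\info_t(\cparam;\att_{t,i})\ge c\,\mathrm{Var}_t(\mrew_i)/\hat r_{t,i}$. That last inequality is false. Take a two-point posterior on $\mrew_i(\cparam)$ with $\P_t(\mrew_i=0)=1-\varepsilon$ and $\P_t(\mrew_i=c)=\varepsilon$ for fixed $c\in(0,1]$. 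Then $\hat r_{t,i}=c\varepsilon$, $\mathrm{Var}_t(\mrew_i)/\hat r_{t,i}=c(1-\varepsilon)$, but $\E_t\big[\binE(\Brn(\mrew_i)\|\Brn(\hat r_{t,i}))\big]\sim c\varepsilon\log(1/\varepsilon)$, so the ratio $\info_t/(\mathrm{Var}_t/\hat r_{t,i})\to 0$ as $\varepsilon\to 0$. Your ``Bernoulli sharpening of Pinsker'' $\binE(p\|q)\gtrsim (p-q)^2/\max(p,q)$, even if true pointwise, has $\max(p,q)$ in the denominator, not $q$; pulling the denominator outside the expectation is exactly what fails in this example, and a truncation at $\{\mrew_i>M\hat r_{t,i}\}$ only recovers a bound with an $M$-dependent constant, not $2$.

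The paper sidesteps this completely by using the Fenchel--Young route of \citet{neu2022lifting}: it never lower-bounds $\binE$, but instead upper-bounds its convex conjugate via $\binE^*(u\|q)\le q(u+u^2/2)$ for $u\le 0$ (their Proposition~\ref{prop:dual_bound}). Applying Fenchel--Young with $p=\mrew_i(\tilde\cparam_t)$, $q=\hat r_{t,i}$ and $u=-\indc(\act_t(k)=i)/(\eta\pi_{t,k}(i))$, multiplying by $\eta\pi_{t,k}(i)$, summing over $i$, and taking $\E_t$ directly yields
\[
\E_t[\reg_{t,k}]\le \eta\,\info_t(\cparam;\att_{\act_t(k),t})+\frac{1}{2\eta}\sum_{i\in\items}\hat r_{t,i},
\]
and optimizing over $\eta$ gives the constant $2$ for free. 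The factor $\hat r_{t,i}$ enters through the $q$ in the dual bound, not through any KL lower bound, which is why the heavy-tailed posterior above causes no trouble. If you want to stay with your covariance picture, you would need to match the numerator weights to a quantity that you can actually lower-bound $\info_t$ by; the Fenchel--Young argument is precisely the mechanism that does this matching automatically.
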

\begin{proof}[Proof of \cref{lem:Info-Ratio}]

    Through Fenchel-Young inequality, we prove this using an idea similar to \citet[Lemma 5]{neu2022lifting}. First, for $p,q\in[0,1]$ let's remind the following definition of $d_{\text{KL}}$ for the Bernoulli distribution
    \begin{equation*}
        \binE(p || q) 
        \eqdef p \log\frac{p}{q} + (1-p)\log\frac{1-p}{1-q}\,,
    \end{equation*}
    where $0\log 0=0$ convention is used. Also, let $\posMeRew_{\act_t(k),t}(\cntx)\eqdef \E_t[\mrew_{\act_t(k)}(\cparam,\cntx)]$ be the posterior mean reward at round $t$ for item at position $k$ of the action, for parameter context tuple $(\cparam,\cntx)$. The Legendre–Fenchel conjugate of $\binE$ with respect to its first argument is defined for all $u \in \R$ as
    \begin{equation}\label{eq:LF}
        \binE^*(u\|q) \eqdef \sup_{p\in[0,1]} (pu - \binE(p\|q) ) = \log(1+q(e^u - 1)) 
        \;,
    \end{equation}
    where the second equality and the inequality follow from \cref{prop:dual_bound}. Let $\pi_{t,k}(i|\cntx_t)=\P(\act_t(k)=i|\cntx_t,\hist_t)$ be the probability of recommending item $i\in\items$ at position $k$ at round $t$. Given $\hist_t$ at round $t$, we let $\eqdist_t$ denote equality in distribution. Now define the pseudo-regret at step $t$ for position $k$ in the action as $\reg_{t,k}=\mrew_{\act^*(k)}(\cparam, \cntx_t)- \mrew_{\act_t(k)}(\cparam, \cntx_t)$, 
    then for any $\eta > 0$ and $k\in \{1,\cdots,\listlen\}$
    \begin{align*}
        \E_t[\reg_{t,k}] 
        &=
        \E_t\left[ %
        \mrew_{\act^*(k)}(\cparam, \cntx_t)-
        \mrew_{\act_t(k)}(\cparam, \cntx_t)
        \right]
        \\
        &=\E_t\left[ %
        \mrew_{\act_t(k)}(\tilde{\cparam}_t, \cntx_t)-
        \mrew_{\act_t(k)}(\cparam, \cntx_t)
        \right] \tag{As $(\cparam,\act^*_{t})\eqdist_t(\tilde{\cparam}_t,\act_{t})$}
        \\
        &= \E_t\left[  %
        \mrew_{\act_t(k)}(\tilde{\cparam}_t,\cntx_t)
        -\posMeRew_{\act_t(k),t}(\cntx_t)
        \right]
        \tag{By conditional independence of $\cparam$ and $\act_t$}
        \\
        &= \E_t\left[  %
        \bigg(\sum_{i\in\items}\indc(\act_t(k)=i) \frac{\eta \pi_{t,k}(i|\cntx_t) }{\eta \pi_{t,k}(i|\cntx_t)}
        \mrew_{i}(\tilde{\cparam}_t,\cntx_t)\bigg)
        -\posMeRew_{\act_t(k),t}(\cntx_t)
        \right]
        \\
        &\le \E_t\bigg[ %
        \bigg(\eta\sum_{i\in\items} \pi_{t,k}(i|\cntx_t)\bigg(\binE(\mrew_{i}(\tilde{\cparam}_t, \cntx_t)\|\posMeRew_{i,t}(\cntx_t))
        \\
        &\qquad\qquad\qquad\qquad\qquad
        +\binE^*\bigg(\frac{-\indc(\act_t(k)=i)}{\eta \pi_{t,k}(i|\cntx_t)}\bigg\| \posMeRew_{i,t}(\cntx_t)\bigg)\bigg)
        -\posMeRew_{\act_t(k),t}(\cntx_t)
        \bigg]\tag{By \cref{eq:LF} where $u\gets\frac{-\indc(\act_t(k)=i)}{\eta \pi_{t,k}(i|\cntx_t)}$, $p\gets\mrew_{i}(\tilde{\cparam}_t, \cntx_t)$, and $q\gets\posMeRew_{i,t}(\cntx_t)$}
        \\
        &\le \E_t\bigg[ %
        \bigg(\eta\sum_{i\in\items} \pi_{t,k}(i|\cntx_t)\bigg(\binE(\mrew_{i}(\tilde{\cparam}_t, \cntx_t)\|\posMeRew_{i,t}(\cntx_t))-\frac{\indc(\act_t(k)=i)}{\eta \pi_{t,k}(i|\cntx_t)}\posMeRew_{i,t}(\cntx_t)
        \\
        &\qquad\qquad\qquad\qquad\qquad +\frac{\indc(\act_t(k)=i)}{2\eta^2 \pi_{t,k}(i|\cntx_t)^2}\posMeRew_{i,t}(\cntx_t)\bigg)
        -\posMeRew_{\act_t(k),t}(\cntx_t)
        \bigg] \tag{By \cref{prop:dual_bound}}
        \\
        &= \E_t\bigg[ %
        \sum_{i\in\items} 
        \pi_{t,k}(i|\cntx_t) \binE(\mrew_{i}(\cparam, \cntx_t)\|\posMeRew_{i,t}(\cntx_t))
        +\frac{1}{2\eta} \posMeRew_{t}(\cntx_t,i)
        \bigg]
        \tag{By tower rule and $\E_t[\indc(\act_t(k)=i)]=\pi_{t,k}(i|\cntx_t)$ and $\cparam\eqdist_t\tilde{\cparam}_t$}
        \\ 
        &= \eta %
        \info_t(\cparam; \att_{\act_t(k),t}) +
        \frac{1}{2\eta}
        \sum_{i\in\items} \E_t[\posMeRew_{t}(\cntx_t,i)]
    \end{align*}
    Minimizing over $\eta$ we get
    \begin{align}
        \E_t[\reg_t] \leq \sqrt{2 \info_t(\cparam; \att_{\act_t(k),t}) \sum_{i\in\items} \E_t[\posMeRew_{t}(\cntx_t,i)]}\;,
    \end{align}
    which readily gives the result.
\end{proof}

\section{Alternative Logistic \ac{TS} Algorithm}\label{sec:log-alter}
In this section, we develop a logistic LTR algorithm that does not require solving equations like \cref{eq:Log-Param-estimate}. Therefore, this algorithm is more computationally attractive. We define $\bar{\obsind}=1$ if $\att_{i,t}=1$ and $\bar{\obsind}=-1$ if $\att_{i,t}=0$. \cref{alg:Log-LTR-newton} lays out the algorithm with step size $\alpha$. Employing standard analysis of online Newton step algorithms like an adaptation of \citet{hazan2007logarithmic,gentile2012multilabel} we can derive similar guarantees for our algorithm as \cref{alg:Log-LTR}. A close example of this is the algorithm in \citet{santara22a}.
\begin{algorithm}[bt]
   \caption{Logistic \ac{TS} for LTR using Newton Steps}
    \label{alg:Log-LTR-newton}
\begin{algorithmic}[1]
    \STATE {\bfseries Input:} Step size $\alpha$
    \STATE {\bf Initialize:} $c_1=1$, $\infMat_1=\listlen \iden$, and $\hat{\cparam}_1=0$
    \FOR{$t=1,\dots,\timH$}{
        \STATE Receive the current context $\cntx_t$.
        \STATE Sample $\tilde{\cparam}_{t} \sim \Ngaus(\hat{\cparam}_{c_t}, \infMat^{-1}_{c_t})$ \COMMENT{Posterior sample}
        \STATE $\act_t \in \argmax_{\substack{ a\subset\items\\|a|={\listlen}}}\sum_{i\in a} \cntx_{i,t}\T\tilde{\cparam}_t $\COMMENT{{ Recommend the top posterior samples} }
        \STATE Observe $\att_t$ and gather $\hist_{t+1}$ and $(\obs_{i,t})_{i\in\items}$.
        \FOR{$k\le \cmclick_t$}{
            \STATE $\infMat_{c_t+k-1} \gets \infMat_{c_t+k-2} + \obsind_{k,t}\cntx_{k,t}\cntx_{k,t}\T$
            \STATE $\hat{\cparam}_{c_t+k} \gets \hat{\cparam}_{c_t+k-1} + \alpha\linkf\left(-\bar{\obsind}_{k,t}\cparam_{c_t+k-1}\T \cntx_{i,t}\right) \bar{\obsind}_{k,t} \infMat_{c_t+k-1}^{-1}\cntx_{k,t}$
        }\ENDFOR
        \STATE $c_{t+1} = c_t +\cmclick_t$
    }\ENDFOR
\end{algorithmic}
\end{algorithm}

\section{\uppercase{Technical Tools}}

\begin{lemma}[Cauchy-Schwartz Inequality in Euclidean Vector Space]\label{lem:CS}
For all vectors $u$ and $v$ in $\R^d$ we know
    \begin{align*}
        \sum_{i=1}^d u_iv_i\leq \sqrt{\sum_{i=1}^d u_i^2}
        \sqrt{\sum_{i=1}^d v_i^2}\;.
    \end{align*}
\end{lemma}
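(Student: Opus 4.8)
The plan is to prove the inequality by the classical nonnegative-quadratic argument. Fix $u,v\in\R^d$. If $v=0$ the left-hand side equals $0$ while the right-hand side is $0$ as well, so the inequality holds trivially, and we may assume $v\neq 0$. I would then consider the real-valued function $g(\lambda)=\sum_{i=1}^d (u_i-\lambda v_i)^2$, which is nonnegative for every $\lambda\in\R$ since it is a sum of squares. Expanding the square gives $g(\lambda)=\big(\sum_{i=1}^d v_i^2\big)\lambda^2 - 2\big(\sum_{i=1}^d u_iv_i\big)\lambda + \sum_{i=1}^d u_i^2$, a quadratic polynomial in $\lambda$ whose leading coefficient $\sum_{i=1}^d v_i^2$ is strictly positive.

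Because this quadratic is everywhere nonnegative, its discriminant must be nonpositive, i.e. $\big(\sum_{i=1}^d u_iv_i\big)^2 \le \big(\sum_{i=1}^d u_i^2\big)\big(\sum_{i=1}^d v_i^2\big)$. Taking square roots and using $\sum_{i=1}^d u_iv_i \le \big|\sum_{i=1}^d u_iv_i\big|$ gives $\sum_{i=1}^d u_iv_i \le \sqrt{\sum_{i=1}^d u_i^2}\,\sqrt{\sum_{i=1}^d v_i^2}$, which is the claim. An equivalent route that sidesteps the case split is the normalization argument: when both vectors are nonzero, set $\hat u = u/\|u\|$ and $\hat v = v/\|v\|$, apply the coordinatewise bound $\hat u_i \hat v_i \le \tfrac{1}{2}(\hat u_i^2 + \hat v_i^2)$, sum over $i$ to obtain $\sum_i \hat u_i \hat v_i \le 1$, and multiply through by $\|u\|\,\|v\|$.

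There is no genuine obstacle here: the lemma is elementary, and the only point requiring (trivial) attention is the degenerate case $v=0$ (or, in the normalization variant, either vector vanishing), where the quadratic $g$ degenerates to a linear function but the asserted inequality still holds with the left side equal to $0$. The statement is isolated as a technical tool purely because it is invoked repeatedly in the regret proofs — to pull a constant outside a square root when summing per-round bounds, and to aggregate the $\listlen$ per-position terms into a single list-level bound — so nothing stronger (such as the two-sided version or the equality characterization) is needed.
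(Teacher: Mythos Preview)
Your proof is correct; the discriminant-of-a-nonnegative-quadratic argument is one of the standard textbook proofs, and your handling of the degenerate case $v=0$ is fine.

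The paper, however, does not actually prove this lemma at all: its ``proof'' consists of the single sentence ``This is a classical inequality that has several proofs'' together with a citation to Steele's \emph{Cauchy--Schwarz Master Class}. So there is no methodological comparison to make --- you have simply supplied a full proof where the paper defers to the literature. Your closing remark about why the lemma is isolated as a technical tool is accurate: it is invoked in the regret proofs to aggregate per-position and per-round terms, and nothing beyond the one-sided inequality is needed.
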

This is a classical inequality that has several proofs \citep{steele2004cauchy}.

\begin{proposition}[\citet{neu2022lifting}, Proposition 1.]\label{prop:dual_bound}
	For any $u\le 0$ and $q\in[0,1]$:
	\begin{equation*}
	\binE^*(u\|q) \le q\left(u + \frac{u^2}{2}\right).
	\end{equation*}
\end{proposition}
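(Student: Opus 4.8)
The plan is to derive the bound directly from the closed form of the Legendre--Fenchel conjugate that already appears in \cref{eq:LF}, namely $\binE^*(u\|q)=\log\big(1+q(e^u-1)\big)$ for $q\in[0,1]$ and $u\in\R$. First I would record (or re-derive) this closed form by the standard conjugate computation: the map $p\mapsto pu-\binE(p\|q)$ is concave on $[0,1]$, its stationarity condition is $u=\log\tfrac{p(1-q)}{q(1-p)}$, and its solution $p^\star=\tfrac{qe^u}{1-q+qe^u}\in[0,1]$ substituted back gives $\log(1-q+qe^u)=\log\big(1+q(e^u-1)\big)$ (equivalently: $\binE(\cdot\|q)$ is the convex conjugate of the $\mathrm{Bernoulli}(q)$ cumulant generating function $u\mapsto\log(1-q+qe^u)$, so conjugating it recovers that function). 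The boundary cases $q\in\{0,1\}$ hold by the usual conventions.

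The remainder is two elementary one-variable inequalities. \textbf{Step 1.} Apply $\log(1+x)\le x$, valid for every $x>-1$, with $x=q(e^u-1)$: the hypothesis $u\le 0$ gives $e^u\le 1$, hence $x\le 0$, and $q\in[0,1]$ gives $x=q(e^u-1)\ge e^u-1>-1$, so the bound applies and yields $\binE^*(u\|q)\le q(e^u-1)$. \textbf{Step 2.} Show $e^u-1\le u+\tfrac{u^2}{2}$ for all $u\le 0$; since $q\ge 0$, multiplying by $q$ then finishes the proof. I would prove this by setting $g(u)=u+\tfrac{u^2}{2}+1-e^u$ and observing $g(0)=0$, $g'(0)=0$, and $g''(u)=1-e^u\ge 0$ on $(-\infty,0]$; hence $g'$ is nondecreasing there, so $g'(u)\le g'(0)=0$, so $g$ is nonincreasing on $(-\infty,0]$, so $g(u)\ge g(0)=0$, which is exactly $e^u-1\le u+\tfrac{u^2}{2}$.

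There is no real obstacle here; the only subtleties are of a bookkeeping nature. The hypothesis $u\le 0$ is genuinely used in both steps and cannot be dropped (for instance $e^u-1\le u+\tfrac{u^2}{2}$ already fails at $u=1$), and $q\le 1$ is what keeps $1+q(e^u-1)$ bounded away from $0$ so that the logarithm bound in Step 1 is legitimate; I would flag both uses explicitly. A less clean case-splitting alternative to the $g''\ge 0$ argument in Step 2 is to expand $e^u-1-u-\tfrac{u^2}{2}=\sum_{k\ge 3}u^k/k!$ and control its sign, but the monotonicity route is self-contained and is the one I would write.
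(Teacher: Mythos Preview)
Your proposal is correct and follows essentially the same approach as the paper: start from the closed form $\binE^*(u\|q)=\log(1+q(e^u-1))$, apply $\log(1+x)\le x$, then apply $e^u-1\le u+\tfrac{u^2}{2}$ for $u\le 0$. The only difference is that you are more thorough in verifying the side conditions (that $1+q(e^u-1)>0$ and that $q\ge 0$ is needed to multiply through) and in proving the exponential inequality via the $g''\ge 0$ argument, whereas the paper simply cites these as standard facts.
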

\begin{proof}
	\begin{align}
	\binE^*(u\|q) = \log(1+q(e^u - 1)) \le q(e^u-1) \le q\left(u+\frac{u^2}{2}\right),\nonumber
	\end{align}
	where the first inequality is from $\log(1+x)\le x$ for any $x>-1$, and the second inequality is from $e^x\le 
	1+x+\frac{x^2}{2}$ for any $x\le 0$.
\end{proof}

\section{\uppercase{Further Experiments and Details}}\label{app:further-experiments}

\paragraph{Experimental setup configuration:} 
For the synthetic experiments, we used a machine with AMD EPYC 7B12, x86-64 processor with 48 cores and 200 GB memory, and each one took around 5 minutes to complete.

\paragraph{Dataset Experiment Details:}
For Web30K and Istella, we remove the features which have a standard deviation less than $10^{-6}$ after normalizing across the datasets.

\paragraph{Further experiments:}
\cref{fig:web30k-cut} shows a truncated version of \cref{fig:web30k} for a closer look into the early rounds of the experiment. As we can see, \algLogTSLTR outperforms the other algorithms.

\cref{fig:web30k-all} shows that \algGTS\ has a competitive performance compared to \algTS.

\begin{figure}[tb]
\centering
    \begin{minipage}[t]{.32\textwidth}
    \includegraphics[width=\textwidth]{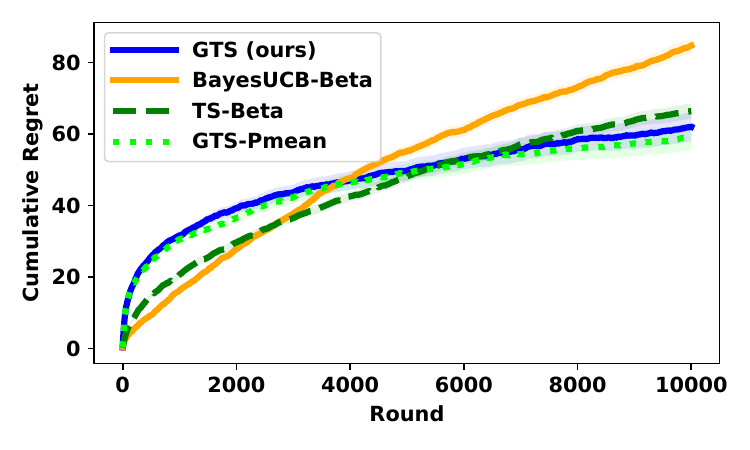}
    \caption{Synthetic non-contextual setting with a subset of algorithms to highlight the differences.}
    \label{fig:stand-subset} %
    \end{minipage}
\hfill
    \begin{minipage}[t]{.32\textwidth}
    \includegraphics[width=\textwidth]{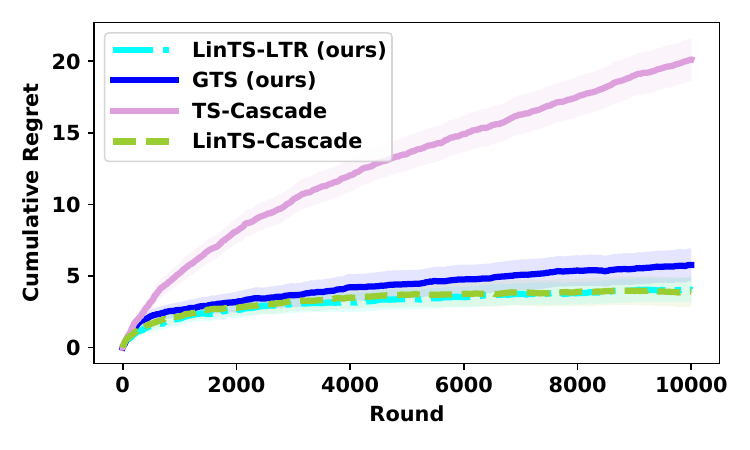}
    \caption{Linear model experiment same as \cref{fig:lin} but some of the algorithms were removed to focus on the difference between the high performing ones.}
    \label{fig:lin-subset}
    \end{minipage}
\hfill
    \begin{minipage}[t]{.32\textwidth}
     \includegraphics[width=\textwidth]{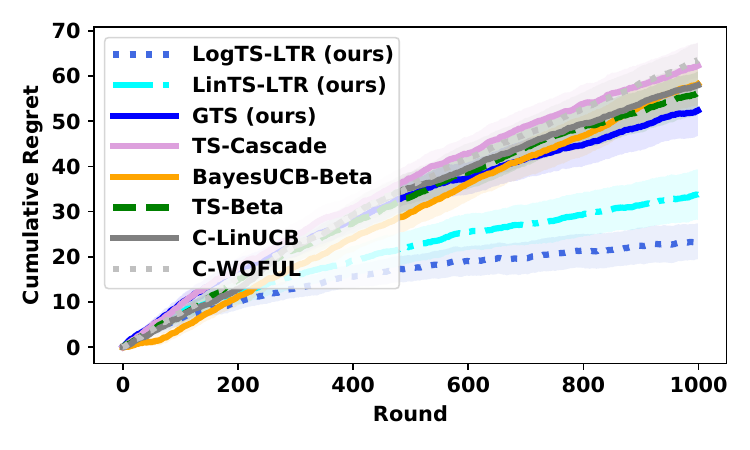}
    \caption{The logistic model experiment showing the first 1000 steps.}
    \label{fig:log-cut}
    \end{minipage}
\end{figure}

\begin{figure}[tb]
\centering
    \begin{minipage}[t]{.45\textwidth}
     \includegraphics[width=\textwidth]{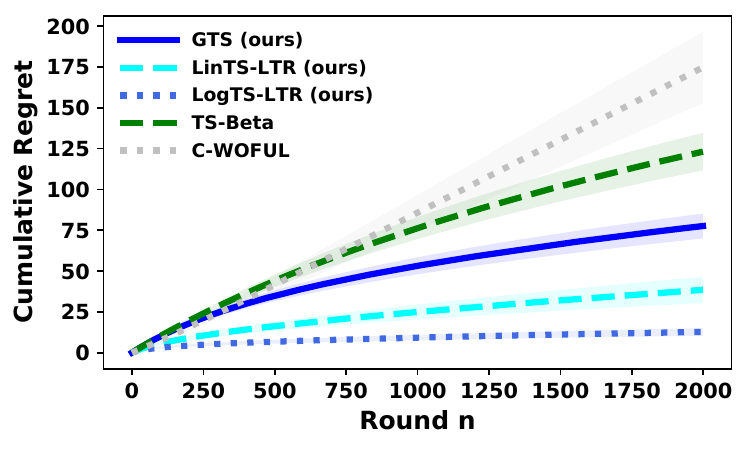}
     \caption{Web30k experiment truncated at 2000 steps.}
     \label{fig:web30k-cut}
    \end{minipage}
\hfill
    \begin{minipage}[t]{.45\textwidth}
     \includegraphics[width=\textwidth]{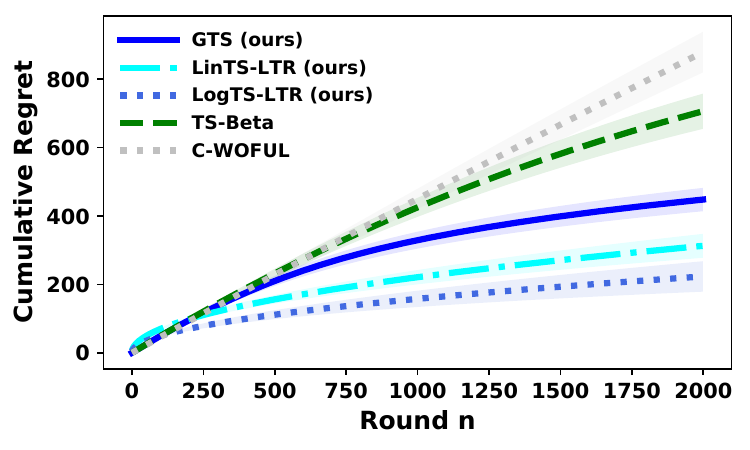}
     \caption{Istella experiment truncated at 2000 steps.}
     \label{fig:Istella-cut}
    \end{minipage}
\hfill
    \begin{minipage}[t]{.45\textwidth}
     \includegraphics[width=\textwidth]{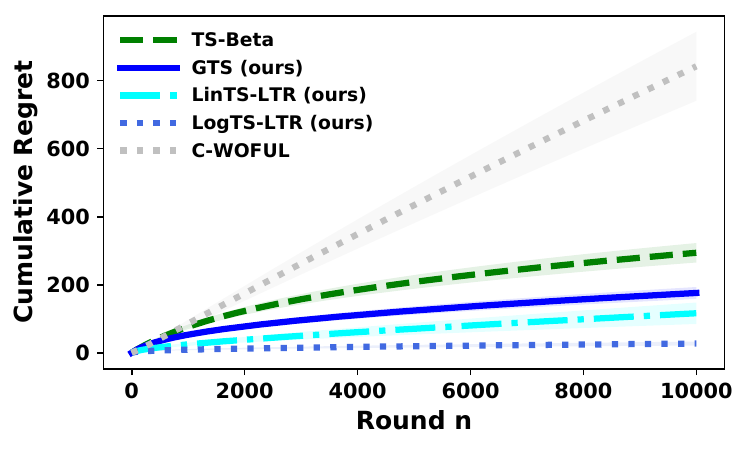}
     \caption{The Web30k experiment with all the algorithms.}
     \label{fig:web30k-all}
    \end{minipage}
\hfill
    \begin{minipage}[t]{.45\textwidth}
     \includegraphics[width=\textwidth]{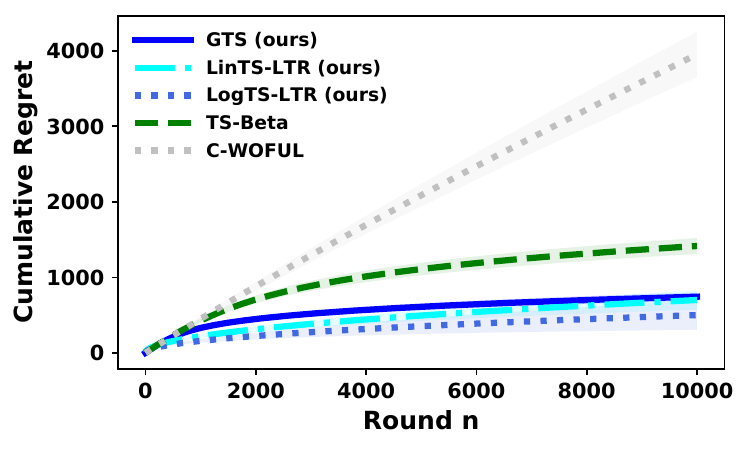}
     \caption{The Istella experiment with all the algorithms.}
     \label{fig:Istella-all}
    \end{minipage}
\end{figure}

\cref{fig:prior-standard-cut,fig:prior-standard} demonstrate the prior initialization where \texttt{BetaMean} is the mean of the (true) Beta prior, and \texttt{BetaVar} is its variance. The legend denotes the [prior mean, variance] for the Gaussian prior of \algGPTS. We can observe that the prior does not have a huge impact on our \algGPTS algorithm (another confirmation to its robustness w.r.t prior misspecification). It seems, however, the correct prior ([\texttt{BetaMean, BetaVar}]) shows slightly better performance in the early stages.

\begin{figure}[t!]
\centering
    \begin{minipage}[t]{.45\textwidth}
    \centering
    \includegraphics[width=\textwidth]{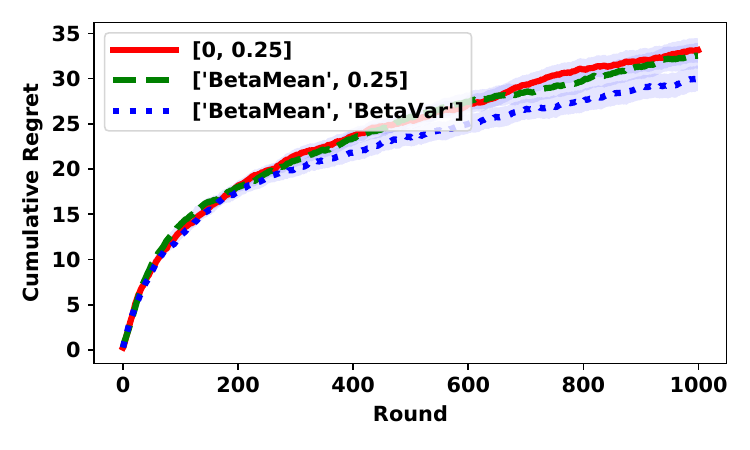}
    \caption{The effect of prior initialization on \algGPTS in a Beta environment, $\timH=1000$.}
    \label{fig:prior-standard-cut}
    \end{minipage}
\hfill
    \begin{minipage}[t]{.45\textwidth}
    \centering
    \includegraphics[width=\textwidth]{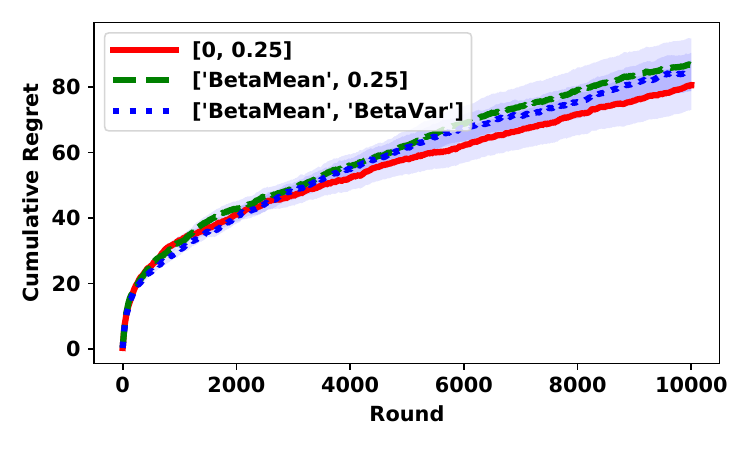}
    \caption{The effect of prior initialization on \algGPTS in a Beta environment, $\timH=10000$.}
    \label{fig:prior-standard}
    \end{minipage}
\end{figure}
\vspace{200pt}
\;

\end{document}